\newcommand{\smjacmat}[1]{\mathcal{M}(#1)}
\newcommand{\smjac}[1]{\mathcal{J}_{\texttt{sm}}(#1)}
\def\diag{\operatorname{diag}}
\def\blkdiag{\operatorname{bdiag}}
\newtheorem{prop}{Proposition}
\begin{document}

\title{Pay Attention to Attention Distribution: A New Local Lipschitz Bound for Transformers} 

\titlerunning{Pay Attention to Attention Distribution}

\author{Nikolay Yudin \inst{1} \and Sergei Kudriashov \inst{1} \and Alexander Gaponov \inst{1} \and Maxim Rakhuba \inst{1}}

\authorrunning{N.~Yudin et al.}

\institute{HSE University \\ \email{neyudin@edu.hse.ru}}

\maketitle

\begin{abstract}
We introduce a novel upper bound on the local Lipschitz constant of the dot-product self-attention block showing its dependence on the attention map distributions.
The proposed bound is not only tighter than the prior art, but for the first time, reveals how the distribution of attention probabilities shapes the local Lipschitz constant of the self-attention block. 
The theoretical basis of the proposed upper bound lies in the refined closed-form upper bounds on singular values of the  Jacobian of softmax function.
Leveraging these theoretical insights, we introduce \texttt{JaSMin} (\textbf{Ja}cobian \textbf{S}oftmax norm \textbf{Min}imization), a lightweight regularizer that directly controls the local Lipschitz constant of each block and, consequently, the entire model. Additionally, we discuss how the nature of the attention map distribution contributes to the gradient dynamics and, consequently, transformer training stability.
  \keywords{Spectral analysis \and Softmax Jacobian \and Lipschitz constant}
\end{abstract}

\section{Introduction}

In this work, we analyze the spectral norm of the Jacobian of dot-product self-attention \citep{vaswani2017attention}, which governs its local Lipschitz properties. 
While prior works attempted to bound the local Lipschitz constant of self-attention \citep{kim2021lipschitz, dasoulas2021lipschitz, castin2023smooth, hu2024specformer}, they disregarded the structure of the attention score matrix, proposing bounds that were dependent only on weights' and inputs' norms. We consider this fact to be a significant limitation of these bounds as it naturally omits the powerful component from the analysis. In our work, we pay a particular attention to this problem, emphasizing the dependency of the self-attention local Lipschitz constant on attention map distribution.
In particular, we establish a novel theoretical upper bound on the local Lipschitz constant of self-attention that is based on the new refined spectral properties of the softmax Jacobian.

Overall, contributions of this work can be summarized as follows:
\begin{itemize}
    \item We introduce interlacing bounds for \textit{all} singular values of the softmax Jacobian, improving eigenvalue interlacing theorem \citep{horn2012matrix, tyrtyshnikov1997brief}  (see \Cref{th:sm_jac_bound}). 
    For the largest singular value, we prove that our bound surpasses the bound based on the Gershgorin theorem \citep{nair2025softmax, Huang_Farahmand_Kitani_Bagnell_2015}.
    \item We introduce an improved upper bound on the spectral norm of the self-attention Jacobian, which captures the nature of attention map scores (see \Cref{th:attn_jac_bound}). We compare our bound with known estimates and show that our bound is tighter than prior art for different attention-based vision models.
    \item Using the proposed bounds on the softmax Jacobian, we show that the gradient vanishes for both the categorical and uniform attention map distributions. In either extreme, this signal decay hinders optimization for these attention heads and layers, linking these phenomena to the softmax 
    Jacobian spectrum. (see \Cref{sec:gradient_sensitivity})
    \item We present \texttt{JaSMin} (\textbf{Ja}cobian \textbf{S}oftmax norm \textbf{Min}imi\-zation), a regularization method aimed to control the self-attention Jacobian spectral norm and quantify gradient sensitivity (see \Cref{sec:lip_const_control}).
    We show that our regularizer reduces the spectral norm of the whole model Jacobian, supporting our theory. In practice \texttt{JaSMin} leads to the increase of adversarial metrics on CIFAR \citep{krizhevsky2009learning} and Imagenette \citep{imagenette} datasets (see \Cref{sec:experiments}).
\end{itemize}

\section{Related Work}

The dot-product self-attention \citep{vaswani2017attention} has become a dominant part of most transformer models, serving as a default option for production-grade large-scale models \citep{jiang2023mistral, yang2025qwen3technicalreport}. 
Despite its popularity, the theoretical understanding of the fundamental properties of this operation is still incomplete. 
Recently, it has been observed that dot-product attention is susceptible to fluctuations, which has a negative impact on the model's performance, especially apparent for long sequences, and has motivated researchers to seek for sparse attention alternatives \citep{ye2025differential, jiang2024minference}.
Another noticeable pattern is an appearance of ``attention sinks'' \citep{xiao2023efficient, gu2025attentionsinkemergeslanguage, feng2025edit0}, when an excessive amount of attention mass allocated to certain tokens, which turn out to serve as ``sinks'' for the excess attention, when further modifications to the residual stream are no longer necessary. 
On the other hand, \citep{dong2021attention} has argued that pure dot-product attention possesses a strong bias towards ``token uniformity'', which leads to the rank collapse and oversmoothing in deep models \citep{shi2022revisitingoversmoothingbertperspective}. While layer normalization \citep{wu2024roleattentionmaskslayernorm} and higher Lipschitzness of MLP layers were shown to stabilize the self-attention dynamics and prevent the doubly exponential convergence to rank-1 representations, the theoretical understanding of interconnections between these phenomena remains limited. 

In the visual domain, the rapid adoption of Vision Transformers (ViTs) \citep{dosovitskiy2020image} has prompted parallel investigations into their architectural robustness, sensitivity, and efficiency. To address the fundamental limitations of standard dot-product attention, architectures like ShiftViT \citep{wang2022shift} and LipShiFT \citep{menon2025lipshift} proposed abandoning attention entirely in favor of lightweight shifting, while Robust Vision Transformer \citep{mao2022towards} introduced position-aware attention scaling to improve inherent resistance to corruptions. Beyond the scope of adversarial robustness, foundational modifications for specific tasks and computational regimes were proposed: Swin Transformer \citep{liu2021swin} localizes attention to shifted windows, improving computational efficiency; CViT \citep{wang2024cvit} switched to encoder-decoder transformers for PDE systems, while EA-ViT \citep{zhu2025ea} adapted models for resource-constrained deployment.

In this work we study attention-based vision models stability problem through the lens of neural network Lipschitz constant.
It is known that vulnerability to adversarial attacks is closely linked to Lipschitz continuity and the Lipschitz constant of the neural network \citep{murdock1999perturbations}. 
Despite its successful widespread use, the dot-product self-attention is not globally Lipschitz \citep{kim2021lipschitz}. 
Such a functional property may theoretically cause training instability and poor robustness \citep{mao2022understanding, zhou2022understanding}.

Recent works that analyze the self-attention's local Lipschitz constant \citep{kim2021lipschitz, hu2024specformer, castin2023smooth, dasoulas2021lipschitz} are aimed at bounding the local Lipschitz constant without taking the attention map into account, building bounds dependent on separate norms of inputs and model weights. We argue that, being a core element of self-attention, softmax largely contributes to the local Lipschitz constant bound, motivating us to explore the local Lipschitz constant dependency from attention map scores.

Multiple prior works considered softmax to be $1$-Lipschitz \citep{hu2024specformer, gao2017properties, NEURIPS2018_6a4d5952, gouk2021regularisation}. We admit that the concurrent work \citep{nair2025softmax} also proposes that global Lipschitz constant of softmax function can be bounded by $\frac{1}{2}$ under any $\ell_p$ norm. Our work focuses on bounding all softmax Jacobian singular values, improving eigenvalue interlacing theorem \citep{horn2012matrix, tyrtyshnikov1997brief} and the result from \citep{nair2025softmax} for the spectral norm.

\section{Notation and Background}
\label{sec:preliminaries}

A neural network can be treated as a function $f\colon \mathcal{X}\subseteq \mathbb{R}^D \to \mathbb{R}^d$, mapping input data to corresponding outputs. A typical approach to quantify its sensitivity to input perturbations is to analyze its Lipschitz continuity.
A Lipschitz constant of $f$ over an open set $\mathcal{X}\subseteq \mathbb{R}^D$ with respect to Euclidean norm $\|\cdot\|_2$ can be defined as follows:
\begin{equation}
    \texttt{Lip}(f; \mathcal{X})=\sup_{\substack{x_1, x_2 \in \mathcal{X}\\ x_1 \neq x_2}} \frac{\|f(x_1)-f(x_2)\|_2}{\|x_1-x_2\|_2}.
\end{equation}
Rather than considering the global Lipschitz continuity (where $\mathcal{X} = \mathbb{R}^D$), we focus on the analysis of the \emph{local} Lipschitz constants, which is less computationally prohibitive.
Specifically, we examine the behavior of $f$ in the neighborhood of balls $B_\varepsilon(x_0)$ of a radius $\varepsilon>0$ centered at various data points $x_0$.
Assuming differentiability and denoting the Jacobian of $f(x)$ by $\mathcal{J}_f(x)\in\mathbb{R}^{d\times D}$, for each such ball we may write:
\begin{equation}
    \texttt{Lip}(f;\mathcal{X}) = \sup_{x\in \mathcal{X}} \|\mathcal{J}_f(x)\|_2,
\end{equation}
where $\|\cdot\|_2$ for a matrix denotes the spectral norm of a Jacobian matrix, which is equal to its largest singular value.
For notation simplicity, we further write all the theoretical results in terms of the spectral norm of the Jacobian matrix, implicitly omitting the supremum across all data points.

In our paper, we focus on controlling the local Lipschitz constant of separate dot product self-attention blocks. The choice of considering local Lipschitz constant instead of global one is based on the fact that dot-product self-attention is not globally Lipschitz \citep{kim2021lipschitz}. Despite the absence of global Lipschitzness, there is an interest in achieving training stability and analyzing key components of attention-based models making them robust to input perturbations.

Let us define dot product self-attention \citep{vaswani2017attention} $\texttt{Attn}_h: \mathbb{R}^{N \times D} \to \mathbb{R}^{N \times d}$:
\begin{equation}\label{eq:attention}
    \texttt{Attn}_h(X) = \texttt{sm}\left(\frac{XW_h^Q(W_h^{K})^{\top}X^{\top}}{\sqrt{d}}\right)XW_h^V = \texttt{sm}\left(XA_hX^{\top}\right)XW_h^V,
\end{equation}
where $X \in \mathbb{R}^{N \times D}$, $W_h^{Q}, W_h^{K}, W_h^{V} \in \mathbb{R}^{D \times d}$, $\texttt{sm}$ denotes row-wise softmax and
\begin{equation}
    A_h = \frac{W_h^{Q}(W_h^{K})^{\top}}{\sqrt{d}}.
\end{equation}
Additionally, let us denote the attention map matrix $P^h \in \mathbb{R}^{N \times N}$ for a head $h$:
\begin{equation}\label{eq:attn_scored_def}
    P^h = \texttt{sm}(XA_hX^{\top}),
\end{equation}
whose rows we further denote as $P^h_{i, :}$. For simplicity, we omit output matrix $W_h^{O}$.

In \citep{kim2021lipschitz} authors first derived the closed form for the single-head self-attention Jacobian: using the notation above, we may write the Jacobian as a block matrix:
\begin{equation}\label{eq:block_jac}
    \mathcal{J}_{\texttt{Attn}_h}(X) = 
    \begin{bmatrix}
        \mathcal{J}_{\texttt{Attn}_h}(X)_{ij}
    \end{bmatrix}_{i,j=1}^{N,N},
\end{equation}
so that each $\mathcal{J}_{\texttt{Attn}_h}(X)_{ij} \in \mathbb{R}^{d \times D}$ can be expressed in terms of $W_h^V, X, P^h_{i, :}, A_h$:
\begin{equation}\label{eq:attn_jac_block}
\begin{split}
	\mathcal{J}_{\texttt{Attn}_h}(X)_{ij} = (W_h^{V})^{\top}X^{\top}\smjacmat{P^h_{i, :}}(E_{ji}XA_h^{\top} & + XA_h\delta_{ij}) + (W_h^{V})^\top \cdot P^h_{ij},
\end{split}
\end{equation}
where
\begin{itemize}
    \item $E_{ji} \in \mathbb{R}^{N \times N}$ --- a binary matrix with zeros everywhere except $(j, i)$-th place;
    \item $\smjacmat{P^h_{i, :}} = \diag(P^h_{i, :}) - (P^h_{i, :})^{\top} P^h_{i, :}$;
    \item $\delta_{ij} \in \{0, 1\}$ is the Kronecker delta.
\end{itemize}
As a result, the aforementioned Jacobian can be rewritten in a pure matrix form:
\begin{equation}\label{eq:matrix_jac}
\begin{split}
    \mathcal{J}_{\texttt{Attn}_h}(X) = (I_N \otimes (W_h^V)^\top X^\top)\blkdiag\left(\smjacmat{P^h_{1, :}}, \dots,\smjacmat{P^h_{N, :}}\right) \\ (I_N \otimes XA_h^\top + (XA_h \otimes I_N) \mathcal{P}_{N, D}) + P^h \otimes (W_h^V)^\top,
\end{split}
\end{equation}
where $\otimes$ denotes Kronecker product, $\blkdiag(\smjacmat{P^h_{1, :}}, \dots, \smjacmat{P^h_{N, :}})$ denotes a block diagonal matrix with $\smjacmat{P^h_{i, :}}$ on the diagonal and $\mathcal{P}_{N, D}$ is a ``perfect shuffle'' permutation matrix \citep{golub2013matrix} satisfying $\mathrm{vec}(X^\top) = \mathcal{P}_{N, D} \mathrm{vec}(X)$ where $\mathrm{vec}$ is taken in ``c'' order.

Utilizing its structure and refined properties of the softmax Jacobian (see \Cref{th:sm_jac_bound}), we show both theoretically and empirically that the spectral norm of the matrix from \Cref{eq:block_jac} heavily depends on the $\blkdiag(\smjacmat{P^h_{1, :}}, \dots, \smjacmat{P^h_{N, :}})$ (see \Cref{th:attn_jac_bound}).

\section{Refined spectral properties of the softmax Jacobian}
\label{sec:refined_softmax_bound}
Self-attention mechanism relies on the softmax function, which critically influences the structure of its Jacobian. In this section, we derive new properties of the softmax Jacobian --- key to our later analysis of the full self-attention Jacobian. Beyond this immediate application, our results refine previously proposed findings regarding the softmax Jacobian spectral norm \citep{gao2017properties, hu2024specformer, nair2025softmax, NEURIPS2018_6a4d5952, gouk2021regularisation}. Additionally, our theoretical result improves classic eigenvalue interlacing theorem \citep{horn2012matrix,tyrtyshnikov1997brief} for this particular structured matrix.

Recall that the softmax function $\texttt{sm}\colon\mathbb{R}^N \to \mathbb{R}^N$ is defined  as
\begin{equation}
    \texttt{sm}(x)_{i} = \frac{e^{x_i}}{\sum_{j=1}^N e^{x_j}}.
\end{equation}
It is well-known that the softmax Jacobian $\smjac{z}\in\mathbb{R}^{N\times N}$ is a symmetric positive semidefinite matrix of the following form:
\begin{equation} \label{eq:softmaxjac}
    \smjac{z} = \diag\left(\texttt{sm}(z)\right) - \texttt{sm}(z)\texttt{sm}(z)^{\top}.
\end{equation}
For simplicity, we further use the following notation: 
\begin{equation}\label{eq:smjacmat}
\smjacmat{x} = \diag(x) - xx^{\top}    
\end{equation}

Prior to stating the theorem, let us recall key auxiliary results and define notation central to our analysis. First of all, we formulate eigenvalue interlacing theorem \citep{horn2012matrix, tyrtyshnikov1997brief} and recall the results introduced in \citep{nair2025softmax}.
\begin{theorem}[Eigenvalue interlacing theorem \citep{horn2012matrix, tyrtyshnikov1997brief}]\label{th:interlacing}
	Let $A, B \in \mathbb{R}^{n \times n}$ be symmetric matrices and $\alpha_1 \geqslant \dots \geqslant \alpha_n, \, \beta_1 \geqslant \dots \geqslant \beta_n$ be their eigenvalues respectively. If
		$A = B + \varepsilon pp^\top$, $\left\|{p}\right\|_{{2}} = 1$,  $\varepsilon > 0$,
	then
	\begin{equation}
		\alpha_1 \geqslant \beta_1 \geqslant \alpha_2 \geqslant \beta_2 \geqslant \dots \geqslant \alpha_n \geqslant \beta_n.
	\end{equation}
\end{theorem}
Directly applying eigenvalue interlacing theorem to the softmax Jacobian matrix $\smjacmat{\texttt{sm}(z)}$, one may obtain the following interlacing property: for $x = \texttt{sm}(z)$ setting $B = \diag(x) - xx^{\top}, \, A = \diag(x), \, p = x/\|x\|_2, \, \varepsilon=\|x\|_2^2$ we obtain
\begin{equation}
    x_{(1)} \geqslant \sigma_{1}(B)  \geqslant
    x_{(2)} \geqslant \sigma_{2}(B) \geqslant
    \dots \geqslant x_{(N)} \geqslant \sigma_N(B),
\end{equation}
where $x_{(k)}$ here and further is top-$k$ largest vector value.
However, even for the case $N=2$ this bound appears to be loose for the first singular value: in the case of $2$-dimensional probability distribution $p = [1-\varepsilon, \varepsilon], \, 0 < \varepsilon < 1$ for a sufficiently small $\varepsilon$ eigenvalue interlacing theorem gives the following upper bound:
\begin{equation}\label{eq:2dsmjacmat}
    1-\varepsilon \geqslant \left\| \smjacmat{[1-\varepsilon, \varepsilon]}\right\|_2 = \left\|\begin{bmatrix}
        \varepsilon(1 - \varepsilon) & -\varepsilon(1-\varepsilon)\\
        -\varepsilon(1 - \varepsilon) & \varepsilon(1-\varepsilon)
    \end{bmatrix}\right\|_2 = 2\varepsilon(1 - \varepsilon).
\end{equation}

\begin{figure}[h]
    \centering
    \begin{tikzpicture}
    \node (picture) at (0, 0) {\includegraphics[width=1\linewidth]{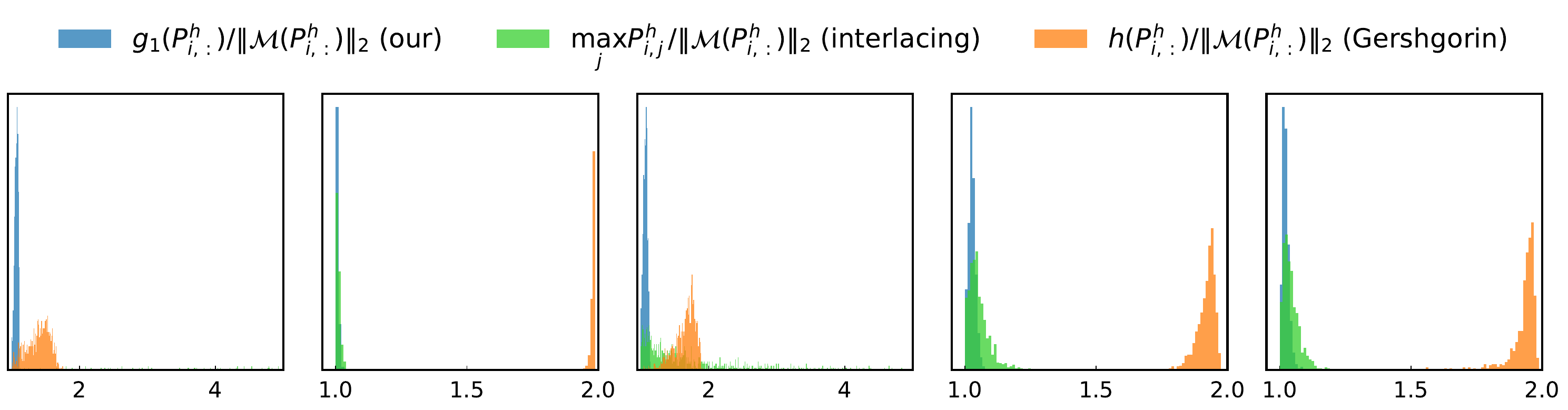}};
    \node[inner sep=0pt] (l4) at (-4.92, -1.6) {\tiny Layer 3, Head 4};
    \node[inner sep=0pt] (l4) at (-2.46, -1.6) {\tiny Layer 4, Head 5};
    \node[inner sep=0pt] (l4) at (0, -1.6) {\tiny Layer 4, Head 8};
    \node[inner sep=0pt] (l4) at (2.46, -1.6) {\tiny Layer 6, Head 5};
    \node[inner sep=0pt] (l4) at (4.92, -1.6) {\tiny Layer 8, Head 2};
    \end{tikzpicture}
    \caption{Comparison of different upper bounds on the spectral norm of the softmax Jacobian. Our bound ${g_{1}(P^h_{i, :})}/{\|\smjacmat{P^h_{i, :}}\|_2}$ is blue,  Gershgorin $h(P^h_{i, :})/{\|\smjacmat{P^h_{i, :}}\|_2}$ is orange and interlacing $\max_{j}P^h_{i,j}$ is green. As anticipated, our bound is tighter than the others. At the same time, the interlacing theorem bound may have large relative approximation error in comparison with the other bound, which is supported by \Cref{tab:ratio_comparison}. Heads, query pixels and ImageNet samples are kept the same for \Cref{fig:attention_maps}.}
    \label{fig:upper_bound_hist}
\end{figure}

This example indicates that bounds obtainable by eigenvalue interlacing theorem can be improved by utilizing the structure of $\smjacmat{x}$.
In \citep{nair2025softmax} authors utilize the structure of $\smjacmat{x}$ and prove that $h(x) \coloneqq 2x_{(1)}(1 - x_{(1)})$ upper bounds the spectral norm of the softmax Jacobian using Gershgorin theorem:
\begin{equation}\label{eq:gershgorin_bound}
    \left\|\mathcal{J}_{\texttt{sm}}(z)\right\|_2 \leqslant 2 \max_{i=1,\dotsc,N}\left(\texttt{sm}(z)_i\right)(1 - \max_{i=1,\dotsc,N} \texttt{sm}(z)_i).
\end{equation}
We argue that the structure of $\smjacmat{x}$ can be utilized even better than it is done in \citep{nair2025softmax}. Below we formulate our main theoretical result about softmax Jacobian singular values.
\begin{definition}\label{def:g}
    Let $x\in\mathbb{R}^N_{\geqslant 0}$ and let $x_{(k)}$ be the $k$-th largest component of $x$. For $k = 1, \dots, N-1$ define
    \begin{equation}\label{def:g_func}
        g_k(x) \coloneqq x_{(k)}(1 - x_{(k)} + x_{(k+1)})
    \end{equation}
    and, for consistency, $g_{N}(x) \equiv 0$.
\end{definition}

\begin{theorem}\label{th:sm_jac_bound}
    Let $x \in \mathbb{R}^N_{\geqslant 0}$ satisfy $\mathbf{1}^{\top} x = 1$, where $\mathbf{1}$ is a vector of all ones.
    Then, the singular values $\sigma_i(A)$ of $A = \mathrm{diag}(x) - xx^\top$ admit the following interlacing property with $x_{(i)}$ and $g_{i}(x)$ from \Cref{def:g}:
    \begin{equation}
        x_{(1)} \geqslant g_1(x) \geqslant \sigma_{1}(A)  \geqslant
        x_{(2)}  \geqslant g_2(x) \geqslant \sigma_{2}(A) \geqslant
        \dots \geqslant x_{(N)} \geqslant g_{N}(x) \geqslant \sigma_N(A).
    \end{equation}
\end{theorem}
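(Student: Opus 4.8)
The plan is to exploit that $A = \diag(x) - xx^\top$ is a rank-one downdate of a diagonal matrix whose diagonal is $x$ itself, and to locate its eigenvalues through the associated secular equation. First I would record the structural facts: $A$ is symmetric and positive semidefinite, since $v^\top A v = \sum_i x_i v_i^2 - (\sum_i x_i v_i)^2$ is the variance of $v$ under the distribution $x$; hence its singular values coincide with its eigenvalues and are nonnegative. Moreover $Ae = x - x(e^\top x) = 0$, so $e$ spans the kernel and $\sigma_n(A) = 0$, disposing of the last entry of the chain. Throughout I would take $x$ sorted decreasingly (so $x_i = x_{(i)}$) and adopt the convention $x_{(n+1)} := 0$, under which $g_n(x) = x_{(n)}(1 - x_{(n)})$, making the final link $x_{(n)} \geq g_n \geq \sigma_n = 0$ immediate.

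Next I would dispatch the purely algebraic links $x_{(k)} \geq g_k(x) \geq x_{(k+1)}$. A short computation gives the two identities $x_{(k)} - g_k = x_{(k)}(x_{(k)} - x_{(k+1)}) \geq 0$ and $g_k - x_{(k+1)} = (1 - x_{(k)})(x_{(k)} - x_{(k+1)}) \geq 0$, where the second uses $x_{(k)} \leq x_{(1)} \leq e^\top x = 1$ together with the ordering. These identities place $g_k$ in the interval $[x_{(k+1)}, x_{(k)}]$ and will also serve as the workhorses later.

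The analytic core is to localize the eigenvalues. For $\lambda \notin \{x_i\}$, an eigenvector $u$ with eigenvalue $\lambda$ must satisfy $(x_i - \lambda)u_i = x_i (x^\top u)$ componentwise, and summing $x_i u_i$ yields the secular equation $f(\lambda) := \sum_i \frac{x_i^2}{x_i - \lambda} = 1$. Since $f' > 0$, the function $f$ is strictly increasing between consecutive poles, so it has exactly one root $\sigma_k \in (x_{(k+1)}, x_{(k)})$; this already recovers the coarse interlacing $x_{(k)} \geq \sigma_k \geq x_{(k+1)}$ (equivalently obtainable from Cauchy interlacing for the rank-one downdate). Because $f$ is increasing and $g_k \in [x_{(k+1)}, x_{(k)}]$, the refined bound $\sigma_k \leq g_k$ is equivalent to the single scalar inequality $f(g_k) \geq 1$.

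Establishing $f(g_k) \geq 1$ is the step I expect to be the main obstacle, and it is where the exact form of $g_k$ is used. The key cancellation is that $x_{(k)} - g_k = x_{(k)}(x_{(k)} - x_{(k+1)})$, which makes the $k$-th term of $f(g_k)$ equal to exactly $\frac{x_{(k)}}{x_{(k)} - x_{(k+1)}}$. I would then discard the nonnegative terms with $i < k$ and control the negative tail $i > k$ using $g_k - x_i \geq g_k - x_{(k+1)} = (1 - x_{(k)})(x_{(k)} - x_{(k+1)})$ together with $\sum_{i>k} x_i^2 \leq x_{(k+1)}\sum_{i>k} x_i = x_{(k+1)}(1 - S_k)$, where $S_k = \sum_{i \leq k} x_i$. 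Substituting these into $f(g_k)$ and clearing the common positive denominator reduces the desired $f(g_k) \geq 1$ to $S_k \geq x_{(k)}$, which is trivially true. Finally I would note that ties and zero entries (which make poles coincide or push $g_k$ onto an endpoint, creating the division-by-zero edge cases) are handled by continuity: both $\sigma_k(x)$ and $g_k(x)$ are continuous on the probability simplex, so proving the inequalities in the generic case of distinct positive entries and passing to the limit yields them in full generality.
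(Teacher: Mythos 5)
Your proof is correct, but it takes a genuinely different route from the paper's for the key refinement $\sigma_k(A)\leqslant g_k(x)$. The paper first invokes the classical rank-one interlacing theorem to get the coarse chain $x_{(k)}\geqslant\sigma_k\geqslant x_{(k+1)}$, and then sharpens it by working with the eigenvector equation $(x-\alpha_k e)\odot y^{(k)}=\langle x,y^{(k)}\rangle x$ directly: it shows $e^\top y^{(k)}=0$ for nonzero eigenvalues, deduces a sign pattern for the coordinates of $y^{(k)}$ (first $k$ nonnegative, the rest nonpositive), and extracts $\alpha_k\leqslant x_{(k)}(1-x_{(k)}+x_{(k+1)})$ from the $k$-th coordinate of the equation. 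You instead pass to the secular equation $f(\lambda)=\sum_i x_i^2/(x_i-\lambda)=1$ and use strict monotonicity of $f$ between consecutive poles to reduce the refined bound to the single scalar inequality $f(g_k)\geqslant 1$; your cancellation $x_{(k)}-g_k=x_{(k)}(x_{(k)}-x_{(k+1)})$, which makes the $k$-th term equal to $x_{(k)}/(x_{(k)}-x_{(k+1)})$, together with the tail estimate via $g_k-x_{(k+1)}=(1-x_{(k)})(x_{(k)}-x_{(k+1)})$ and $\sum_{i>k}x_{(i)}^2\leqslant x_{(k+1)}(1-S_k)$, does reduce everything to $S_k\geqslant x_{(k)}$ after clearing denominators --- I checked the algebra and it closes. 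Your approach buys a cleaner reduction to one scalar inequality, a uniform treatment of all indices, and a tidy handling of degeneracies (ties, zero entries) by density and continuity of $\sigma_k$ and $g_k$ on the simplex; the paper's approach avoids the secular function entirely but must argue separately about eigenvector signs and reduce zero entries to a leading-minor case. One cosmetic point both proofs share: $g_n$ is only defined in the paper for $k\leqslant n-1$, so your explicit convention $x_{(n+1)}:=0$ is needed (and worth stating) to make the last link $x_{(n)}\geqslant g_n\geqslant\sigma_n=0$ meaningful.
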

\begin{proof}
    See Appendix~\ref{sec:proof_sm_jac_bound} for the proof.
\end{proof}

\begin{remark}
    From the theorem above it immediately follows that softmax is $1/2$-Lipschitz \wrt Euclidean norm: $\, \forall x, y \in \mathbb{R}^{n}$ we have
    \begin{equation}
         \|\texttt{sm}(x) - \texttt{sm}(y)\|_2 \leqslant 1/2 \cdot \|x-y\|_2.
    \end{equation}
\end{remark}

\subsection{How accurate is the new upper bound?}
We highlight that \Cref{th:sm_jac_bound} for the first singular value is tighter than direct application of both Gershgorin ($h(x)$) and interlacing theorems ($x_{(1)}$):
\begin{equation}
    \sigma_1(A) \leqslant g_{1}(x) \leqslant h(x), \quad
    \sigma_1(A) \leqslant g_{1}(x) \leqslant x_{(1)}.
\end{equation}
\begin{wraptable}{r}{0.5\textwidth}  
    \begin{minipage}{\linewidth}
  \centering
  \caption{Mean relative errors of different softmax Jacobian spectral norm estimation methods. Values are averaged across all heads, layers and samples.}
  \setlength{\tabcolsep}{8pt}
  \label{tab:ratio_comparison}
  \begin{tabular}{ccc}
    \hline
    Th.~\ref{th:sm_jac_bound} (ours) & Th.~\ref{th:interlacing} & Eq.~\ref{eq:gershgorin_bound} \\
    $0.035$ & $20.19$ & $0.85$ \\
    \hline
  \end{tabular}
  \end{minipage}
\end{wraptable}

\begin{wrapfigure}{r}{0.5\textwidth}
    \begin{minipage}{\linewidth}
    \centering
    \includegraphics[width=0.6\linewidth]{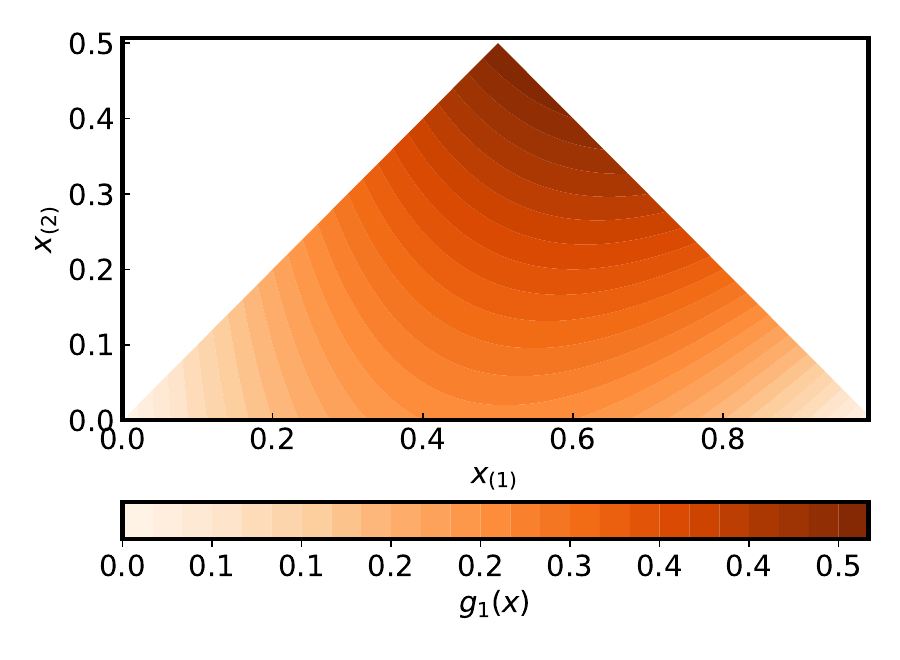}
    \caption{$g_1(x)$ for $x \in \mathbb{R}^{n}: \, \mathbf{1}^{\top}x=1$. $\max_{x} g_1 = 1/2$ is attained at $(x_{(1)}, x_{(2)})=(1/2,1/2)$ and $\min_{x} g_1 =0$ is attained at $(1,0)$, $(0, 0)$.}
    \label{fig:g1-region} 
    \end{minipage}
\end{wrapfigure}

Now let us examine what happens on a real-world example. 
\Cref{fig:attention_maps} presents various ViT-L attention maps for a certain sequence token $i$, averaged over 1000 ImageNet samples.
We report the average exact value of $\|\smjacmat{P^h_{i, :}}\|_2$ and the corresponding upper bounds derived via our approach and the Gershgorin theorem.

We observe that our bound consistently outperforms the Gershgorin theorem bound in all the examples.
A more detailed view on the distribution of the upper bounds is given in \Cref{fig:upper_bound_hist}.
In this figure, we plot the distribution of the ratio $g_1(P^h_{i, :}) / \|\smjacmat{P^h_{i, :}}\|_2$ and compare it with the distribution of the ratio $h(P^h_{i, :})/\|\smjacmat{P^h_{i, :}}\|_2$ for the same setting as on \Cref{fig:attention_maps}.
It can be seen that our bound is noticeably tighter than the other bounds.
To additionally show that our bound is sharper than the bound from eigenvalue interlacing theorem, we provide \Cref{tab:ratio_comparison} which reports mean relative approximation error for different softmax Jacobian spectral norm upper bounding methods.
Similarly to setups for \Cref{fig:attention_maps} and \Cref{fig:upper_bound_hist}, we average the relative approximation error across all token distributions, heads and layers.

Another noticeable effect is that the smaller the norm values, the closer the distribution becomes to either categorical or uniform distributions.
To understand this, we draw the behavior of $g_1(x)$ in \Cref{fig:g1-region}.
We see that these components can be small only at the corners $(1, 0)$, $(0, 0)$.

\begin{figure}[h]
    \centering
    \scalebox{0.9}{
    \begin{tikzpicture}
    \node (picture) at (0, 0) {\includegraphics[width=1\linewidth]{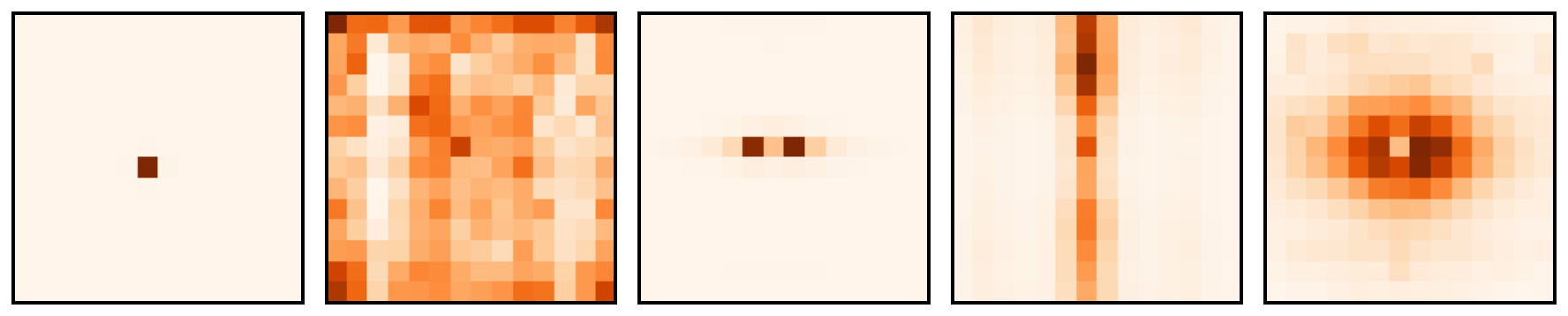}};
    
    \node[inner sep=0pt] (l1) at (-4.85, 1.8) {\tiny exact: 0.050};
    \node[inner sep=0pt] (l2) at (-4.85, 1.6) {\tiny our bound: 0.053};
    \node[inner sep=0pt] (l3) at (-4.85, 1.4) {\tiny gershgorin: 0.068};
    \node[inner sep=0pt] (l4) at (-4.85, -1.4) {\tiny Layer 3, Head 4};

    \node[inner sep=0pt] (l1) at (-2.45, 1.8) {\tiny exact: 0.021};
    \node[inner sep=0pt] (l2) at (-2.45, 1.6) {\tiny our bound: 0.021};
    \node[inner sep=0pt] (l3) at (-2.45, 1.4) {\tiny gershgorin: 0.042};
    \node[inner sep=0pt] (l4) at (-2.45, -1.4) {\tiny Layer 4, Head 5};

    \node[inner sep=0pt] (l1) at (0, 1.8) {\tiny exact: 0.253};
    \node[inner sep=0pt] (l2) at (0, 1.6) {\tiny our bound: 0.271};
    \node[inner sep=0pt] (l3) at (0, 1.4) {\tiny gershgorin: 0.412};
    \node[inner sep=0pt] (l4) at (0, -1.4) {\tiny Layer 4, Head 8};

    \node[inner sep=0pt] (l1) at (2.45, 1.8) {\tiny exact: 0.050};
    \node[inner sep=0pt] (l2) at (2.45, 1.6) {\tiny our bound: 0.082};
    \node[inner sep=0pt] (l3) at (2.45, 1.4) {\tiny gershgorin: 0.084};
    \node[inner sep=0pt] (l4) at (2.45, -1.4) {\tiny Layer 6, Head 5};

    \node[inner sep=0pt] (l1) at (4.85, 1.8) {\tiny exact: 0.071};
    \node[inner sep=0pt] (l2) at (4.85, 1.6) {\tiny our bound: 0.074};
    \node[inner sep=0pt] (l3) at (4.85, 1.4) {\tiny gershgorin: 0.137};
    \node[inner sep=0pt] (l4) at (4.85, -1.4) {\tiny Layer 8, Head 2};
    \end{tikzpicture}
    }
    \caption{Attention probabilities for ViT-L
    model for certain heads. The title of each heatmap describes true average value of $\|\smjacmat{P^h_{i, :}}\|_2$ together with its our and Gershgorin bounds. Each attention map is averaged across $1000$ ImageNet samples.}
    \label{fig:attention_maps}
\end{figure}

\section{A new upper bound for self-attention Jacobian}
\label{sec:refined_attn_bound}
Previous works that aimed to bound the self-attention Jacobian spectral norm \citep{hu2024specformer, castin2023smooth, dasoulas2021lipschitz} were primarily focused on the dependency of the self-attention Jacobian on the norms of inputs and model weights. Such an approach is rather limiting as it completely disregards all the information about attention map scores, which is a core non-linear mechanism inside self-attention. In the prior paper \citep{hu2024specformer} authors omit the term containing softmax by bounding corresponding spectral norm by~$1$. In the paper \citep{castin2023smooth} authors use probabilistic techniques to bound the spectral norm of the self-attention Jacobian, but still omit any softmax scores' dependencies in their main results.
In this section, we introduce a novel approach to bound the spectral norm of self-attention Jacobian taking into account attention map scores nature. 
Below we provide our core theoretical contribution.

\begin{theorem}\label{th:attn_jac_bound}
    For the self-attention mechanism defined in \Cref{eq:attention},
    the following inequality holds:
    \begin{equation} \label{eq:main_bound}
        \|\mathcal{J}_{\mathtt{Attn}_h}(X)\|_2 \leqslant \left\|{W_h^V}\right\|_{{2}}\cdot 
        \Big(\left\|{P^h}\right\|_{{2}} + 
        2 \left\|{X}\right\|_{{2}}^2 \left\|{A_h}\right\|_{{2}}\max_{i}\left\|\smjacmat{P^h_{i, :}}\right\|_{{2}}\Big),
    \end{equation}
    where $\smjacmat{P_{i, :}^h}$ is defined as in \Cref{eq:smjacmat}.
\end{theorem}

\begin{proof}
    See Appendix~\ref{sec:proof_attn_jac_bound} for the proof.    
\end{proof}
\begin{remark}
Note that due to the continuity of the softmax function, we can readily obtain the local Lipschitz bound in a neighborhood of $X$ from \Cref{eq:main_bound}.
\end{remark}
 
\begin{remark}
Since multi-head self-attention is simply a linear combination of individual attention heads followed by an output projection, we can bound its norm by the sum of the corresponding spectral norms of each head's transformation:
\begin{equation}\label{eq:multihead_bound}
\begin{split}
    & \left\|
    \begin{bmatrix}
            \mathcal{J}_{\texttt{Attn}_1}(X) & \dots & \mathcal{J}_{\texttt{Attn}_H}(X)
    \end{bmatrix}
    \right\|_2 
    \leqslant \sum_{i=1}^{H}\left\|\mathcal{J}_{\texttt{Attn}_i}(X)\right\|_2.
\end{split}
\end{equation}
\end{remark}

This justifies our use of a summed penalty across heads and layers in the regularization term in \Cref{sec:lip_const_control}.

Our result provides a conditional interpretation of sink-like behavior noticed in papers \citep{gu2025attentionsinkemergeslanguage, xiao2023efficient} through the lens of self-attention Jacobian matrix spectral properties. The Lipschitz constant decreases when row-wise attention distributions enter a highly concentrated, near-categorical regime. Importantly, the similarity to attention sinks is that both involve concentration of attention mass, though it is not identical to the usual empirical notion of attention sinks, where the same token or position consistently attracts attention across all inputs. Overall, our upper bound provides a partial theoretical explanation that sufficiently concentrated attention mass produces in a stabilizing effect.

\subsection{Comparison with existing bounds}
\label{sec:comparixon_with_existing_bounds}
Below we recall existing results for upper bounds of dot-product self-attention local Lipschitz constant in order to compare them with our bound. 

Let $B_R(y) \subset \mathbb{R}^D$ represent an open $\ell_2$-ball of radius $R$  centered at $y \in \mathbb{R}^{D}$. Additionally, let $B_R^N(y) \subset \mathbb{R}^{N \times D}$ for $y \in \mathbb{R}^{D}$ denote a set of matrices $Y \in \mathbb{R}^{N \times D}$ such that $\forall \, i=1, \dotsc, N:\,\|Y_{i, :}\|_2 \in B_R(y)$ and $\mathrm{vec}(W)$ is a column-wise vectorization of a matrix into a vector.

\begin{theorem}[\citep{hu2024specformer}]\label{th:specformer}
    The spectral norm of a single-head self-attention admits the following bound:
\begin{equation}
    \left\|\mathcal{J}_{\mathtt{Attn}_h}(X)\right\|_2 \leqslant N(N+1) \cdot \|X\|_F^2 \cdot \left(\|W^V_h\|_2\|W^Q_h\|_2\|W^K_h\|_2 + \|W^V_h\|_2\right).
\end{equation}
\end{theorem}
The upper bound proposed in the paper \citep{hu2024specformer} is far from the real spectral norm because it depends quadratically on $N$.
In paper \citep{castin2023smooth} the authors provide a tight bound for the local Lipschitz constant of unmasked self-attention and propose to bound the spectral norm of self-attention Jacobian  as $\mathcal{O}(\sqrt{N})$ in $N$.

\begin{theorem}[\citep{castin2023smooth}]\label{th:how_smooth_is_attention}
Let $X \in \mathbb{R}^{N \times D}$ and let $R>0$ be such that $X \in B^N_R(0) \subset \mathbb{R}^{N \times D}$. Then, 
\begin{equation}
    \left\|\mathcal{J}_{\mathtt{Attn}_h}(X)\right\|_2 \leqslant \sqrt{3}\|W_h^V\|_2\left(\|A_h\|_2^2R^4(4N+1) + N\right)^{\frac{1}{2}}.
\end{equation}
\end{theorem}
One may notice that the bound provided in \Cref{th:attn_jac_bound} has a worse order in $N$ than the bound provided in \Cref{th:how_smooth_is_attention}. Indeed, the proposed bound can be linear in $N$ provided that $X$ has the spectral norm of order $\mathcal{O}(\sqrt{\max(N, D)})$ for both random or Pre-LayerNorm of $X$ \citep{xiong2020preln}.  However, in our upper bound we have a term $\max_{i=1,\dotsc,N}\|\smjacmat{P_{i, :}^h}\|_2$ which in practice makes our bound more precise than the one provided in \Cref{th:how_smooth_is_attention}. To validate it empirically, we take ViT-L and process ImageNet samples showing that our bound on the self-attention Jacobian spectral norm is tighter than prior art for a sequence length $N = 196$ across all heads and layers (see \Cref{fig:vit_l_consts}). We observe that our bound from \Cref{th:attn_jac_bound} is tighter, outperforming all existing bounds within a considerable margin. This observation supports our theory regarding the influence of attention map distribution on self-attention local Lipschitzness.

Despite the fact that in practice $\max_{i=1,\dotsc,N}\|\smjacmat{P_{i, :}^h}\|_2$ gives more advantage than $\sqrt{N}$, we are also able to improve the bound from \Cref{th:how_smooth_is_attention} by a constant factor by slightly altering the end of our proof 
(see Appendix~\ref{sec:proofattn_jac_bound_without_sm_jac}).

\begin{prop}\label{prop:attn_jac_bound_without_sm_jac}
Under the assumptions of \Cref{th:how_smooth_is_attention}:
    \begin{equation}
        \left\|\mathcal{J}_{\mathtt{Attn}_h}(X)\right\|_2 \leqslant \|W_h^V\|_2 \bigl(\sqrt{N} + 2\sqrt{N}R^2\|A_h\|_2 \bigr).
    \end{equation}
\end{prop}

\begin{figure}[t]
    \centering
    \resizebox{0.8\textwidth}{!}{
    \begin{tikzpicture}
    \node (picture) at (0, 0) {\includegraphics[width=1\linewidth]{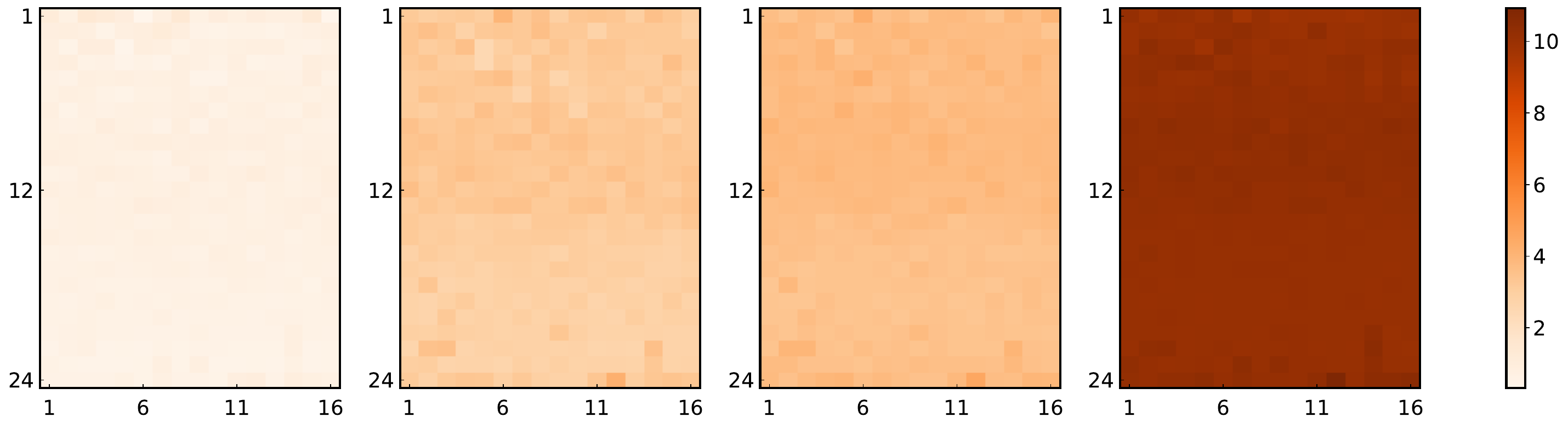}};
    \node[inner sep=0pt] (l1) at (-4.6, 2) {\tiny Exact};
    \node[inner sep=0pt] (l2) at (-4.6, 1.8) {\tiny Mean: 4.8e1};
    \node[inner sep=0pt] (l1) at (-1.8, 2) {\tiny \Cref{th:attn_jac_bound}};
    \node[inner sep=0pt] (l2) at (-1.8, 1.8) {\tiny Mean: 1.3e3}; 
    \node[inner sep=0pt] (l1) at (1, 2) {\tiny\Cref{th:how_smooth_is_attention}};
    \node[inner sep=0pt] (l2) at (1, 1.8) {\tiny Mean: 4.7e3}; 
    \node[inner sep=0pt] (l1) at (3.8, 2) {\tiny \Cref{th:specformer}}; 
    \node[inner sep=0pt] (l2) at (3.8, 1.8) {\tiny Mean: 1.3e10}; 
    \node[inner sep=0pt] (h1) at (-4.6, -1.8) {\tiny Head};
    \node[inner sep=0pt] (h1) at (-1.8, -1.8) {\tiny Head};
    \node[inner sep=0pt] (h1) at (1, -1.8) {\tiny Head};
    \node[inner sep=0pt] (h1) at (3.8, -1.8) {\tiny Head};
    \node[inner sep=0pt, rotate=90] (l3) at (-6.25, 0) {\tiny Layer};
    \end{tikzpicture}
    }
    \caption{Self-attention Jacobian spectral norms' comparison for each head and layer of the ViT-L model across 5000 ImageNet samples. The title of each heatmap represents the mean value of the bound across all heads and layers.}
    \label{fig:vit_l_consts}
\end{figure}

\section{Gradient sensitivity to attention map distributions}
\label{sec:gradient_sensitivity}

While the $1/2$-Lipschitzness of softmax already motivates increased stability of transformers to exploding gradients due to the upper bound on each layer's sensitivity, our approach allows for a more fine-grained characterization of gradient dynamics as well.

Although bounding the local Lipschitzness was noticed to have a positive effect on the robustness properties of deep learning models \citep{yang2020accVSrob}, it was also shown that it may lead to the well-known accuracy-robustness trade-off \citep{bethune2022payattention2loss}. 
Unlike previous works, that have mostly focused on generalization properties of locally Lipschitz functions \citep{muthukumar23sparsityaware}, we aim to show that softmax distributions significantly impact gradient dynamics and affect optimization --- observations, that have independently been made in studies, which focused on signal propagation, training dynamics and ``sink''-like effects in large-scale transformer-based models \citep{noci2022signalpropagation, zhai2023stabilizing, gu2025attentionsinkemergeslanguage}. 

To illustrate the point, notice that the gradient of an arbitrary parameter of an attention block $X$ of the shallower layer can be expressed as a matrix-vector multiplication of the gradient w.r.t. its output  with the adjoint Jacobian:
\begin{equation}
    \left \langle \frac{\partial L}{\partial \texttt{Attn}_h(X)}, \mathcal{J}_{\texttt{Attn}_h}(X) dX \right \rangle = \left \langle \mathcal{J}_{\texttt{Attn}_h}(X)^\top \frac{\partial L}{\partial \texttt{Attn}_h(X)}, dX \right \rangle.
\end{equation}
Then, with a little abuse of notation, we can bound gradient norms by the product of the Jacobian spectral norm and a gradient obtained from backpropagation. 

Thus, for any $W \in \{W_h^Q, W_h^K, W_h^V\}$ from self-attention we have:
\begin{gather}\label{eq:jac_grad_relation}
    \left \| \frac{\partial L}{\partial W} \right \|_F = \left \| \mathrm{vec}\left(\cfrac{\partial L}{\partial W}\right) \right \|_2 \leqslant \left \| \cfrac{\partial \texttt{Attn}_h(X)}{\partial W}\right \|_2 \left \| \frac{\partial L}{\partial \texttt{Attn}_h(X)} \right \|_F.
\end{gather}

It can be seen that gradients \wrt the weights depend not only on the sensitivity of deeper layers to attention block's output, but on the spectral norm of the gradient \wrt $W_h^Q, W_h^K, W_h^V$. 
To show a connection between the attention map distribution nature and the gradient norm, below we provide a distribution-dependent upper bound for $\|\partial \texttt{Attn}_h(X)/\partial W\|_2$, where $W \in \{W_h^Q, W_h^K, W_h^V\}$.

\begin{prop}\label{prop:gradient_norms_from_attention_jac}
    For the self-attention mechanism defined in \Cref{eq:attention}, gradient norms can be bounded as follows:
\begin{gather}
    \left \| \frac{\partial \mathtt{Attn}_h(X)}{\partial W_h^Q} \right \|_2 \leqslant \cfrac{1}{\sqrt{d}} \lVert W_h^V \rVert_2 \lVert W_h^K \rVert_2 \lVert X \rVert_2^3 \max_{i}\left\|\smjacmat{P^h_{i, :}}\right\|_{{2}} \\
    \left \| \cfrac{\partial \mathtt{Attn}_h(X)}{\partial W_h^K} \right \|_2 \leqslant \cfrac{1}{\sqrt{d}} \lVert W_h^V \rVert_2 \lVert W_h^Q \rVert_2 \lVert X \rVert_2^3 \max_{i}\left\|\smjacmat{P^h_{i, :}}\right\|_{{2}} 
\end{gather}
and for $W_h^V$: $\displaystyle \left \| \cfrac{\partial \mathtt{Attn}_h(X)}{\partial W_h^V} \right \|_2 \leqslant \lVert P \rVert_2 \lVert X \rVert_2 \leqslant \sqrt{\max_j \textstyle{\sum_{i=1}^N} P_{ij}} \ \lVert X \rVert_2$.
\end{prop}

\begin{proof}
    See Appendix~\ref{sec:proof_gradient_norms_from_attention_jac} for the proof.    
\end{proof}

To support Proposition~\ref{prop:gradient_norms_from_attention_jac} empirically, we measure gradient spectral norm \wrt $Q$ and $K$ matrices for 1000 ImageNet test samples along with $g_1(P^h_{i, :})$ ---  the upper bound on $\max_{i}\|\smjacmat{P_{i, :}^h}\|_2$. 
~\Cref{fig:grad_norm_to_g_1_corr} shows the correlation between gradient norms divided by $\|X\|_2^3$, which allows to exclude input norm dependency in our bound (following Proposition~\ref{prop:gradient_norms_from_attention_jac}), and $\max_i g_1(P^h_{i, :})$.  

Let us make a few observations regarding the gradient structure. The first thing to be noted is the difference in gradient dynamics for $W_h^Q, W_h^K$ and $W_h^V$. As $P$ is row-stochastic and LayerNorm is applied to $X$,  we can upper bound $\lVert PX \rVert_2$ by $\sqrt{ND}$ and notice that $\partial \texttt{Attn}_h(X)/{\partial W_h^V}$ does not necessarily lead to gradient vanishing. However, we could observe that uniform attention distribution pushes $\lVert P \rVert_2$ towards $1$, unlike categorical attention patterns, where the bound approaches its maximal value $\sqrt{N}$. 

In contrast, the gradients w.r.t. the $W_h^Q$ and $W_h^K$ may fade, shrinking towards $0$ in the proximity of \emph{either uniform or categorical distributions for all tokens in the sequence}, an effect overlooked by \citep{noci2022signalpropagation}. It should also be noted that although both lead to the gradient vanishing, the uniform case attains its minimum at $1/N$, scaling with sequence length, while the categorical one shrinks right to $0$.

Similar effects that unify observed regimes were identified in prior literature, but haven't been attributed to the spectral properties of attention maps. When attention distributions inside one attention map all become uniform-like, an update of the attention block becomes close to a rank-$1$ matrix, leading to the rank collapse phenomenon \citep{dong2021attention} and training instability \citep{chen2025condensation, noci2022signalpropagation}. On the other hand, attention spikes have been attributed to ``sink-like'' effects and entropy collapse, often phrased as an ability to perform ``no-op'' operation in deeper layers \citep{barbero2025llms}. Similarly to the uniform case, it has been shown to induce training oscillations or divergence \citep{zhai2023stabilizing}, with recent ``sinkless'' architectures displaying superior results on most benchmarks \citep{qiu2025gated}. 
Importantly, our bound provides a precise and efficient measure of proximity to these regimes, serving as a practical proxy for vanishing gradients during the forward pass.

\begin{figure}[t]
    \centering
    \includegraphics[width=0.7\linewidth]{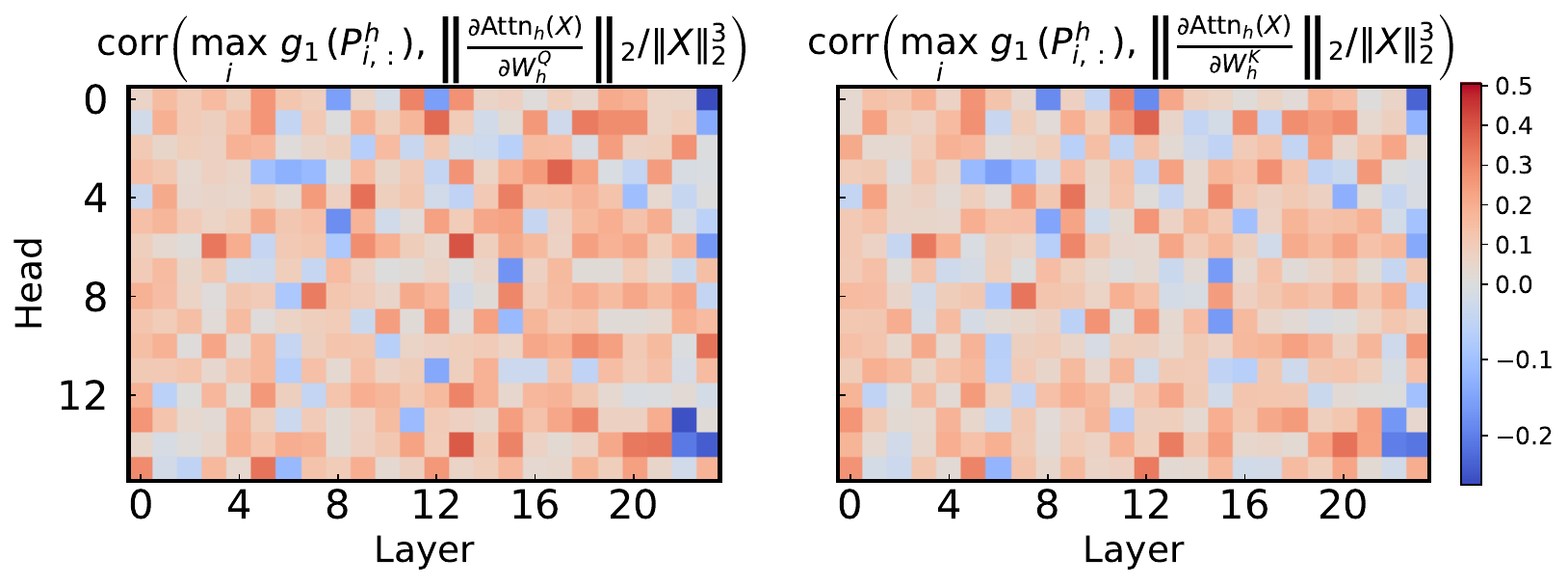}
    \caption{Correlation between norms for $W_Q$, $W_K$ and $\max_i g_1(P_{i,:}^h)$ averaged across 1000 ImageNet samples for ViT-L model. To discard input norm dependency, we divide gradient norm by $\|X\|_2^3$, following the bound proposed in Proposition~\ref{prop:gradient_norms_from_attention_jac}.}
    \label{fig:grad_norm_to_g_1_corr}
\end{figure}

\section{\texttt{JaSMin} regularizer}
\label{sec:lip_const_control}

From \Cref{th:attn_jac_bound} we observe that the spectral norm of the self-attention depends on three types of components: (1) input-only part $\|X\|_2^2$, (2) parts depending on weight matrices (\eg $\|A_h\|_2$ and $\|W_h^V\|_2$) and (3) mixed terms that are functions of the matrix $P^h$, where $P^h$ in turn depends on both weights and inputs ($\|P^h\|_2$ and $\|\max_{i}\smjacmat{P^h_{i, :}}\|_2$). 
The regularizer that we present in this section as a method to improve robustness and validate our findings targets the latter terms. 
In particular, we propose $2$ strategies for the choice of a penalty:
\begin{equation}
    \mathcal{L} = \mathcal{L}_{\texttt{model}} + \lambda \mathcal{L}_{\texttt{JaSMin}_{k}},
\end{equation}
where $\mathcal{L}_{\texttt{model}}$ is a model loss function, $\lambda$ is the regularization parameter, and 
 $\texttt{JaSMin}_{k}$ is the proposed modification.
The first way is to penalize $\log(\|\smjacmat{P_{i, :}^h}\|_2)$ as it naturally constrains the upper bound from \Cref{th:attn_jac_bound}:
\begin{equation} \label{eq:loss_reg_1}
    \mathcal{L}_{\texttt{JaSMin}_{k=0}} = \sum_{l, h=1}^{L, H} \max_{i} \log\left(g_1(P_{i, :}^{l, h})\right).
\end{equation}
 
The second strategy is based on the interlacing property of the softmax Jacobian singular values: we aim to penalize the logarithm of the following ratio:
\begin{equation} \label{eq:loss_reg_2}
    \mathcal{L}_{\texttt{JaSMin}_k} =  \sum_{l, h=1}^{L, H} \max_{i} \log\left(\frac{g_1(P_{i, :}^{l, h})}{g_k(P_{i, :}^{l, h})}\right), \quad k > 1.
\end{equation}

\begin{table*}[htbp]
  \centering
  \caption{Comparison of \texttt{Specformer} and \texttt{JaSMin} approaches for ViT-B on CIFAR-100. Number after the name of attack denotes attack budget. We evaluate PGD and AutoAttack with $4$ and $10$ steps respectively. Each run is averaged across $3$ seeds.}
  \resizebox{\textwidth}{!}{
  \begin{tabular}{lcccccc}
    \toprule
    \multicolumn{1}{c}{Method} & \multicolumn{6}{c}{CIFAR-100} \\
    \cmidrule(lr){2-7} 
    & Standard & FGSM2 & FGSM4 & PGD2 & PGD4 & AA2 \\
    \midrule 
    \texttt{Baseline} & $91.05_{\pm 0.02}$ & $46.64_{\pm 0.71}$ & $39.86_{\pm 0.60}$ & $27.47_{\pm 0.40}$ & $13.11_{\pm 0.33}$ & $2.10_{\pm 0.01}$ \\
    \texttt{Specformer}$_{(10^{-2},10^{-2},10^{-2})}$ & $90.06_{\pm 0.01}$ & $44.90_{\pm 1.86}$ & $37.65_{\pm 1.20}$ & $28.08_{\pm 0.96}$ & $12.61_{\pm 0.30}$ & $\underline{3.05}_{\pm 0.02}$ \\
    \texttt{Specformer}$_{(0,0,10^{-2})}$ & $\mathbf{91.14}_{\pm 0.05}$ & $46.58_{\pm 0.04}$ & $39.97_{\pm 0.07}$ & $27.59_{\pm 0.01}$ & $12.56_{\pm 0.12}$ & $2.10_{\pm 0.01}$ \\
    \texttt{JaSMin}$_{k=0, \lambda=1e\!-\!2}$ & $88.83_{\pm 0.01}$ & $\underline{48.28}_{\pm 0.31}$ & $\underline{41.38}_{\pm 0.29}$ & $\mathbf{32.58}_{\pm 0.35}$ & $\mathbf{18.90}_{\pm 0.28}$ & $\mathbf{3.62}_{\pm 0.01}$ \\
    \texttt{JaSMin}$_{k=10, \lambda=1e\!-\!3}$ & $\underline{91.08}_{\pm 0.02}$ & $\mathbf{48.90}_{\pm 0.05}$ & $\mathbf{42.34}_{\pm 0.03}$ & $\underline{30.29}_{\pm 0.09}$ & $\underline{15.47}_{\pm 0.09}$ & $2.45_{\pm 0.01}$ \\
    \bottomrule
  \end{tabular}
  }
  \label{tab:results_cifar100}
\end{table*}

As we show below in Proposition~\ref{prop:uniform_on_k}, this regularizer relaxes the categorical distribution constraint and yields more flexibility towards a uniform distribution.

Note that the memory overhead of \texttt{JaSMin} is only $\mathcal{O}(1)$ due to the associative properties of the summation and maximum operations, while \texttt{Specformer} needs to store additional $\mathcal{O}(LD)$ memory for singular vectors at training stage.

\paragraph{\textbf{Regularization of the upper bound.}}
\label{par:reg_with_largest_singvalue}

By minimizing \Cref{eq:loss_reg_1} that is based on the upper bound on the first singular value, we implicitly constrain $\|{\smjacmat{P_{i, :}^{l, h}}}\|_{{2}}$  for every layer $l$ and head $h$. 
Let us precisely derive what we can say about $P_{i, :}^{l, h}$ if $g_1$ is bounded by a constant $\gamma < {1}/{4}$.
By solving the quadratic inequality, we arrive at:
\begin{equation}\label{eq:quadratic_inequality}
      (P^{l, h}_{i, :})_{(1)} \leqslant \frac{1 - \sqrt{1 - 4\gamma}}{2} \; \lor \; (P^{l, h}_{i, :})_{(1)} \geqslant \frac{1 + \sqrt{1 - 4\gamma}}{2}.
\end{equation}

Hence, with this regularization term we enforce attention distributions to become either ``more'' uniform or ``more'' categorical, excluding all the intermediate options, according to \Cref{fig:g1-region}.

\paragraph{\textbf{Regularization of the ratio.}}
The interlacing property of \Cref{th:sm_jac_bound} allows us to bound the ratio of singular values:
\begin{equation}\label{eq:ratio_bound}
\sigma_1(P_{i, :}^{l, h}) / \sigma_{k-1}(P_{i, :}^{l, h}) \leqslant
{g_1\left(P_{i, :}^{l, h}\right)}/{g_k\left(P_{i, :}^{l, h}\right)}.
\end{equation}
An interesting observation that we make is that by bounding the upper bound from \Cref{eq:ratio_bound}, we can also constrain the norm to be small.
The following proposition yields an upper bound on the spectral norm in this scenario.

\begin{prop}\label{prop:uniform_on_k}
Let   $1 \leqslant \gamma \leqslant {k}/{4}$ and
$
{g_1\left(P_{i, :}^{l, h}\right)}/{g_k\left(P_{i, :}^{l, h}\right)} \leqslant \gamma.
$
Then, we have:
$
\displaystyle{
\left\|{\smjacmat{P_{i, :}^{l, h}}}\right\|_{{2}} \leqslant \frac{1 - \sqrt{1 - {4\gamma}/{k}}}{2} = \mathcal{O}\left(\frac{\gamma}{k}\right).
}
$
\end{prop}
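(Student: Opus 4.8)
The plan is to exploit the interlacing bound of Theorem~\ref{th:sm_jac_bound} (equivalently, Corollary~\ref{th:corollary_bound}) to pass from the spectral norm to the scalar $g_1$, and then to convert the ratio hypothesis into an \emph{absolute} bound on $g_1$ by controlling $g_k$ from above. Throughout I would write $x = P_{i, :}^{l, h}$; as a row of the attention map it is a softmax output, so $x \in \mathbb{R}^n_{\geqslant 0}$ with $e^\top x = 1$ and Theorem~\ref{th:sm_jac_bound} applies to $\smjacmat{x}$.

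The first and key step is the elementary estimate $g_k(x) \leqslant 1/k$. Since $x_{(1)} \geqslant \dots \geqslant x_{(k)}$ are the $k$ largest (nonnegative) coordinates and they sum to at most $1$, the smallest of them obeys the pigeonhole bound $x_{(k)} \leqslant \frac{1}{k}\sum_{j=1}^{k} x_{(j)} \leqslant 1/k$. Moreover, by the ordering $x_{(k+1)} \leqslant x_{(k)}$, the factor $1 - x_{(k)} + x_{(k+1)} \leqslant 1$, so from Definition~\eqref{def:g_func} we obtain $g_k(x) = x_{(k)}(1 - x_{(k)} + x_{(k+1)}) \leqslant x_{(k)} \leqslant 1/k$.

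Combining this with the hypothesis $g_1(x)/g_k(x) \leqslant \gamma$ yields $g_1(x) \leqslant \gamma\, g_k(x) \leqslant \gamma/k$. Applying Corollary~\ref{th:corollary_bound} then gives $\left\|\smjacmat{x}\right\|_2 \leqslant g_1(x) \leqslant \gamma/k$. Finally, the assumption $\gamma \leqslant k/4$ guarantees $\gamma/k \leqslant 1/4$, so $\sqrt{1 - 4\gamma/k}$ is real and the scalar inequality $t \leqslant \tfrac{1 - \sqrt{1-4t}}{2}$ for $t = \gamma/k \in [0,1/4]$ (which, after rationalizing, reduces to $4t^2 \geqslant 0$) upgrades the estimate to the stated closed form $\left\|\smjacmat{x}\right\|_2 \leqslant \tfrac{1-\sqrt{1-4\gamma/k}}{2}$. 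The asymptotic $\mathcal{O}(\gamma/k)$ follows by rationalizing the numerator, $\tfrac{1-\sqrt{1-4\gamma/k}}{2} = \tfrac{2\gamma/k}{1+\sqrt{1-4\gamma/k}} \leqslant 2\gamma/k$.

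I expect the only genuinely delicate point to be the bound $g_k(x) \leqslant 1/k$: it requires both the pigeonhole argument on $x_{(k)}$ and the monotonicity $x_{(k+1)} \leqslant x_{(k)}$ that controls the multiplicative factor, and this is precisely where the range of $\gamma$ enters. The lower bound $\gamma \geqslant 1$ is a consistency condition, since the interlacing chain of Theorem~\ref{th:sm_jac_bound} forces $g_1 \geqslant x_{(k)} \geqslant g_k$ and hence the ratio $\geqslant 1$; the upper bound $\gamma \leqslant k/4$ ensures reality of the square root. The remaining steps are direct invocations of Corollary~\ref{th:corollary_bound} and a one-line scalar inequality, so no further obstacle is anticipated.
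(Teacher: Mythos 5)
Your proof is correct and follows essentially the same route as the paper's: the pigeonhole bound $x_{(k)} \leqslant 1/k$ combined with $1 - x_{(k)} + x_{(k+1)} \leqslant 1$ gives $g_k \leqslant 1/k$, and the ratio hypothesis together with the interlacing bound $\left\|\smjacmat{x}\right\|_2 \leqslant g_1(x)$ yields the estimate $\gamma/k$. The only divergence is the last step — the paper reaches the closed form by solving the quadratic inequality $x_{(1)}(1-x_{(1)}) \leqslant \gamma/k$ for $x_{(1)}$, whereas you bound the norm by $\gamma/k$ directly and verify the scalar inequality $t \leqslant \tfrac{1-\sqrt{1-4t}}{2}$ — which is slightly cleaner, completes the chain all the way to the stated norm bound, and in fact delivers the marginally sharper constant $\gamma/k$ itself.
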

\begin{proof}
    See Appendix~\ref{sec:proof_of_prop_1} for the proof.    
\end{proof}

\begin{table*}[htbp]
  \centering
  \caption{Comparison of \texttt{Specformer} and \texttt{JaSMin} approaches for ViT-B on Imagenette. Number after the name of attack denotes attack budget. We evaluate PGD and AutoAttack with $4$ and $10$ steps respectively.}
  \begin{tabular}{lcccccc}
    \toprule
    \multicolumn{1}{c}{Method} & \multicolumn{6}{c}{Imagenette} \\
    \cmidrule(lr){2-7} 
    & Standard & FGSM2 & FGSM4 & PGD2 & PGD4 & AA2 \\
    \midrule 
    \texttt{Baseline} & $\underline{98.00}$ & $\mathbf{70.12}$ & $\underline{55.51}$	& $45.02$ & $15.13$	&6$.86$ \\
    \texttt{Specformer}$_{(10^{-2}, 0, 0)}$ & $\mathbf{98.12}$ & $\underline{70.03}$ & $\mathbf{55.95}$ & $45.76$	& $15.98$ & $7.31$  \\
    \texttt{JaSMin}$_{k=30, \lambda=10^{-2}}$ & $97.07$ & $68.67$ & $54.94$ & $\mathbf{48.18}$ & $\mathbf{20.67}$ & $\underline{10.62}$ \\
    \texttt{JaSMin}$_{k=70, \lambda=10^{-2}}$ & $96.37$ & $66.34$ & $51.66$ & $\underline{46.01}$ & $\underline{18.55}$ & $\mathbf{11.95}$ \\
    \bottomrule
  \end{tabular}
  \label{tab:results_imagenette}
\end{table*}

To understand what happens with the distribution, let us consider the corner case of $\gamma=1$.
In this case, due to the interlacing property from \Cref{th:sm_jac_bound}: 
\begin{equation}
x_{(2)}/x_{(k)} \leqslant g_1 / g_{k} = 1.
\end{equation}
One can show (see Appendix~\ref{sec:proof_of_prop_1})
that in this case $x_{(1)} = \dots = x_{(k)}$.
So any uniform distribution for top-$p$ elements, $k\leqslant p\leqslant N$ will fulfill this condition.
This yields more flexibility in the uniform distribution as opposed to using $\texttt{JaSMin}_{k=0}$.

It should be noted that our regularizer with $k > 1$ is not immediately applicable to models utilizing attention masking. For example, for causal attention mask, the regularizer will be unbounded on the first $(k-1)$ tokens for any $k > 1$.

\paragraph{\textbf{Method efficiency}.}

The \texttt{JaSMin} method introduces an additional computational cost of $\mathcal{O}\left(BLHN^2\right)$ FLOPs compared to \texttt{Specformer}'s $\mathcal{O}\left(LD^2\right)$ where $B$ is the batch size. Although \texttt{JaSMin}'s dependence on $N^2$ might appear to be a limitation, it does not pose a substantial computational cost in ViT experiments, as the sequence length is negligible for most CV models, which results in faster computations than  that of \texttt{Specformer}. As $N$ increases, the \texttt{JaSMin} overhead becomes progressively less significant, being overshadowed by the transformer's $\mathcal{O}\left(BL\left(ND^2+N^2D\right)\right)$ computational cost.

\section{Experiments}
\label{sec:experiments}

We train the ViT-B model \citep{dosovitskiy2020image} to classify images using CIFAR-10, CIFAR-100 \citep{krizhevsky2009learning} and Imagenette \citep{imagenette} datasets, and show that our regularizer is able to control the local Lipschitz constant, improving robust metrics without a significant quality drop (see \Cref{tab:results_cifar100}, \Cref{tab:results_imagenette}, \Cref{tab:ablation_results_cifar100} and \Cref{tab:ablation_results_cifar10} for more details). Our models are tuned on the respective datasets starting from checkpoints pretrained on the ImageNet dataset \citep{5206848}. Patch size is set to $4$ in this experiment for CIFAR-10 and CIFAR-100, while for Imagenette we utilize patch size equal to $16$.

For comparison we use the original ViT training setup with \texttt{Specformer} and \texttt{JaSMin} method. \texttt{Specformer} is parameterized with three regularization coefficients for the largest singular values of the self-attention query, key and value matrices. We reproduced the setup from \texttt{Specformer} paper \citep{hu2024specformer} and compare our models to theirs by substituting the regularization parameters only, the distribution prefix length $k$ and a regularization coefficient $\lambda$. 
To show that our method not only reduces the local Lipschitz constant, but also improves the robust metrics, we evaluate our models on several adversarial attacks. 

We report FGSM \citep{goodfellow2015explainingharnessingadversarialexamples} and PGD \citep{madry2019deeplearningmodelsresistant} with $4$ steps and AutoAttack (AA) \citep{pmlr-v119-croce20b} with $10$ steps. 
FGSM and PGD are configured with the attack budgets of $4/255$ and $2/255$, while AutoAttack, being more severe, is used with a smaller budget of $2/255$ to provide more evident results. All attack budgets are measured in terms of the infinity norm.
Other training details can be found in 
Appendix~\ref{sec:add_experiments}.
\begin{wrapfigure}{r}{0.5\textwidth}
    \centering
    \resizebox{0.47\textwidth}{!}{
    \begin{tikzpicture}
    \node (picture) at (0, 0) {\includegraphics[width=0.5\linewidth]{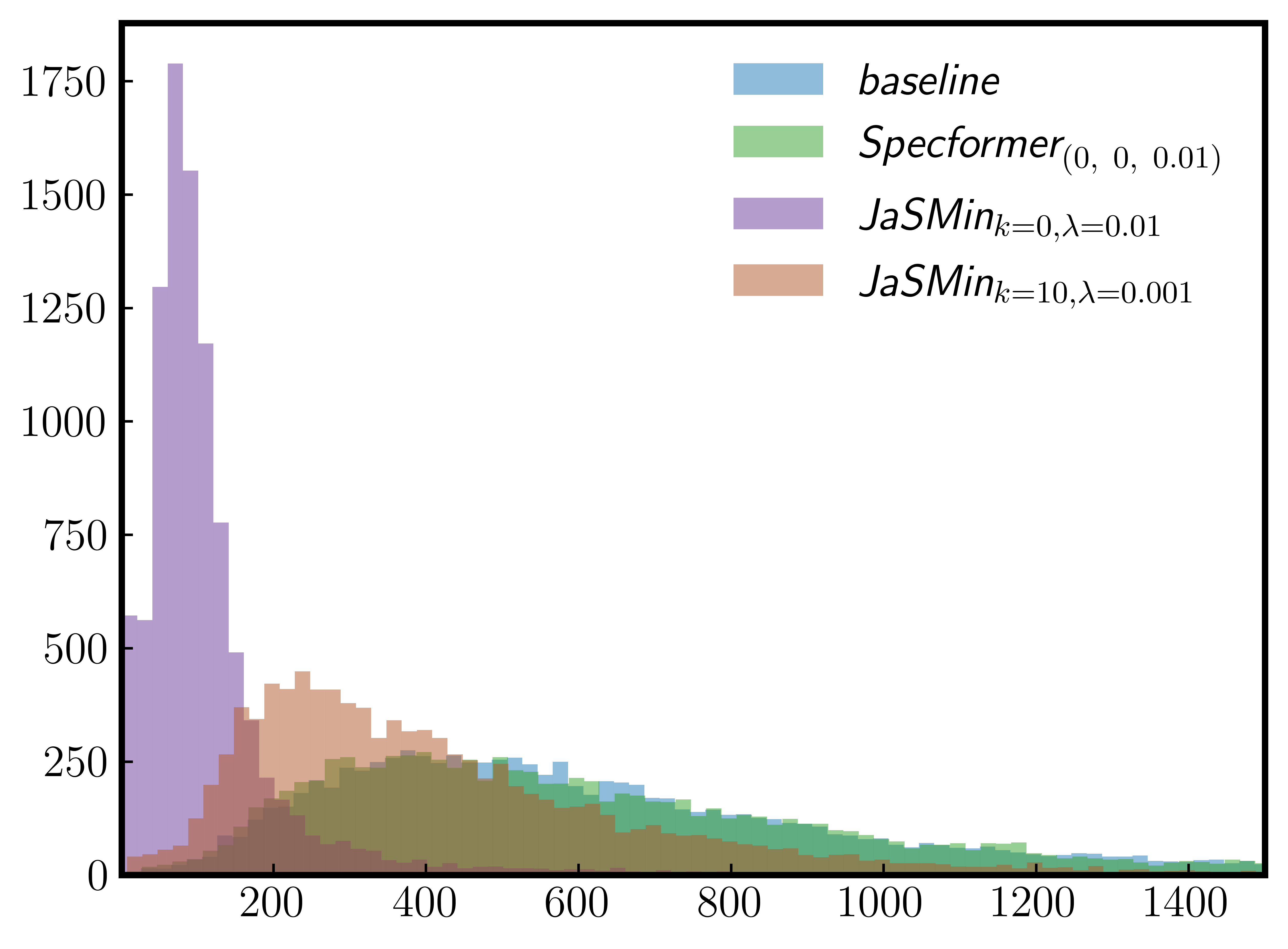}};
    \end{tikzpicture}
    }
    \caption{
    Comparison of distributions of the spectral norm of the whole model's Jacobian (trained on CIFAR-100) using different regularization methods. The spectral norm is computed via power method at each point of the test dataset.
    }
    \label{fig:model_Jacobian} 
\end{wrapfigure}
Additionally, we report gradient norm dynamics for different model weights in
Appendix~\ref{sec:gradient_dynamic_analyses} for better
understanding of the potential gradient vanishing problem.
Apart from metrics on adversarial attacks, we report the distribution of the local Lipschitz constant. We remove patch embedding layer from the model and compute Jacobian norm for the rest of the model, as \texttt{JaSMin} does not regularize positional embeddings and a convolutional layer. 
As illustrated in \Cref{fig:model_Jacobian}, our regularizer considerably reduces the local Lipschitz constant of the model, supporting our findings, while \texttt{Specformer} regularizer does not considerably change the mean value of the local Lipschitz constant.

\section{Conclusion}
We introduce a new theoretical upper bound for the spectral norm of the self-attention Jacobian. This refinement is based on the improved upper bound for the largest singular value of the softmax Jacobian. Through experiments with vision transformers we observe that the nature of the attention map distribution crucially affects the self-attention local Lipschitz constant. Additionally, we provide refined interlacing bounds for the softmax Jacobian singular values, provably improving the ones obtainable via an eigenvalue interlacing theorem. To highlight the influence of the attention map distribution on the self-attention local Lipschitz constant, we introduce a lightweight regularizer \texttt{JaSMin}. Apart from that, we show that controlling attention map distributions indeed results in a significant drop of the network's local Lipschitz constant.

\section*{Acknowledgements}
The work was supported by the grant for research centers in the field of AI provided by the Ministry of Economic Development of the Russian Federation in accordance with the agreement 000000C313925P4E0002 and the agreement with HSE University № 139-15-2025-009.
This research was supported in part through computational resources of HPC facilities at HSE University \cite{kostenetskiy2021hpc}.

\bibliographystyle{splncs04}
\bibliography{main}

@String(ICCV  = {Int. Conf. Comput. Vis.})

@String(ICML  = {Int. Conf. Mach. Learn.})

@String(AAAI  = {AAAI})

@String(ICCV  = {ICCV})

@String(ICML  = {ICML})

@inproceedings{castin2023smooth,
author = {Castin, Val\'{e}rie and Ablin, Pierre and Peyr\'{e}, Gabriel},
title = {How smooth is attention?},
year = {2024},
publisher = {JMLR.org},
abstract = {Self-attention and masked self-attention are at the heart of Transformers' outstanding success. Still, our mathematical understanding of attention, in particular of its Lipschitz properties - which are key when it comes to analyzing robustness and expressive power - is incomplete. We provide a detailed study of the Lipschitz constant of self-attention in several practical scenarios, discussing the impact of the sequence length n and layer normalization on the local Lipschitz constant of both unmasked and masked self-attention. In particular, we show that for inputs of length n in any compact set, the Lipschitz constant of self-attention is bounded by √n up to a constant factor and that this bound is tight for reasonable sequence lengths. When the sequence length n is too large for the previous bound to be tight, which we refer to as the mean-field regime, we provide an upper bound and a matching lower bound which are independent of n. Our mean-field framework for masked self-attention is novel and of independent interest. Our experiments on pretrained and randomly initialized BERT and GPT-2 support our theoretical findings.},
booktitle = {Proceedings of the 41st International Conference on Machine Learning},
articleno = {225},
numpages = {24},
location = {Vienna, Austria},
series = {ICML'24}
}

@inproceedings{pmlr-v119-croce20b,
  title={Reliable evaluation of adversarial robustness with an ensemble of diverse parameter-free attacks},
  author={Croce, Francesco and Hein, Matthias},
  booktitle={International conference on machine learning},
  pages={2206--2216},
  year={2020},
  organization={PMLR}
}

@article{jiang2024minference,
  title={{MI}nference 1.0: Accelerating pre-filling for long-context {LLM}s via dynamic sparse attention},
  author={Jiang, Huiqiang and Li, Yucheng and Zhang, Chengruidong and Wu, Qianhui and Luo, Xufang and Ahn, Surin and Han, Zhenhua and Abdi, Amir H and Li, Dongsheng and Lin, Chin-Yew and others},
  journal={Advances in Neural Information Processing Systems},
  volume={37},
  pages={52481--52515},
  year={2024}
}

@inproceedings{
ye2025differential,
title={Differential Transformer},
author={Tianzhu Ye and Li Dong and Yuqing Xia and Yutao Sun and Yi Zhu and Gao Huang and Furu Wei},
booktitle={The Thirteenth International Conference on Learning Representations},
year={2025},
url={https://openreview.net/forum?id=OvoCm1gGhN}
}

@inproceedings{hu2024specformer,
  title={SpecFormer: Guarding vision transformer robustness via maximum singular value penalization},
  author={Hu, Xixu and Zheng, Runkai and Wang, Jindong and Leung, Cheuk Hang and Wu, Qi and Xie, Xing},
  booktitle={European Conference on Computer Vision},
  pages={345--362},
  year={2024},
  organization={Springer}
}

@inproceedings{kim2021lipschitz,
  title={The {L}ipschitz constant of self-attention},
  author={Kim, Hyunjik and Papamakarios, George and Mnih, Andriy},
  booktitle={International Conference on Machine Learning},
  pages={5562--5571},
  year={2021},
  organization={PMLR}
}

@book{horn2012matrix, 
    place={Cambridge}, 
    title={Matrix Analysis}, 
    publisher={Cambridge University Press}, 
    author={Horn, Roger A. and Johnson, Charles R.}, 
    year={1985}
}

@inproceedings{
zhang2017mixup,
title={mixup: Beyond Empirical Risk Minimization},
author={Hongyi Zhang and Moustapha Cisse and Yann N. Dauphin and David Lopez-Paz},
booktitle={International Conference on Learning Representations},
year={2018},
url={https://openreview.net/forum?id=r1Ddp1-Rb},
}

@article{gouk2021regularisation,
	abstract = {We investigate the effect of explicitly enforcing the Lipschitz continuity of neural networks with respect to their inputs. To this end, we provide a simple technique for computing an upper bound to the Lipschitz constant---for multiple p-norms---of a feed forward neural network composed of commonly used layer types. Our technique is then used to formulate training a neural network with a bounded Lipschitz constant as a constrained optimisation problem that can be solved using projected stochastic gradient methods. Our evaluation study shows that the performance of the resulting models exceeds that of models trained with other common regularisers. We also provide evidence that the hyperparameters are intuitive to tune, demonstrate how the choice of norm for computing the Lipschitz constant impacts the resulting model, and show that the performance gains provided by our method are particularly noticeable when only a small amount of training data is available.},
	author = {Gouk, Henry and Frank, Eibe and Pfahringer, Bernhard and Cree, Michael J.},
	date = {2021/02/01},
	date-added = {2026-03-04 18:18:11 +0300},
	date-modified = {2026-03-04 18:18:11 +0300},
	doi = {10.1007/s10994-020-05929-w},
	id = {Gouk2021},
	isbn = {1573-0565},
	journal = {Machine Learning},
	number = {2},
	pages = {393--416},
	title = {Regularisation of neural networks by enforcing {L}ipschitz continuity},
	volume = {110},
	year = {2021},
	bdsk-url-1 = {https://doi.org/10.1007/s10994-020-05929-w}}

@article{NEURIPS2018_6a4d5952,
  title={On controllable sparse alternatives to softmax},
  author={Laha, Anirban and Chemmengath, Saneem Ahmed and Agrawal, Priyanka and Khapra, Mitesh and Sankaranarayanan, Karthik and Ramaswamy, Harish G},
  journal={Advances in neural information processing systems},
  volume={31},
  year={2018}
}

@inproceedings{yun2019cutmix,
  title={Cut{M}ix: Regularization strategy to train strong classifiers with localizable features},
  author={Yun, Sangdoo and Han, Dongyoon and Oh, Seong Joon and Chun, Sanghyuk and Choe, Junsuk and Yoo, Youngjoon},
  booktitle={Proceedings of the IEEE/CVF international conference on computer vision},
  pages={6023--6032},
  year={2019}
}

@inproceedings{dasoulas2021lipschitz,
  title={Lipschitz normalization for self-attention layers with application to graph neural networks},
  author={Dasoulas, George and Scaman, Kevin and Virmaux, Aladin},
  booktitle={International Conference on Machine Learning},
  pages={2456--2466},
  year={2021},
  organization={PMLR}
}

@book{tyrtyshnikov1997brief,
  title={A Brief Introduction to Numerical Analysis},
  author={Tyrtyshnikov, Eugene E},
  year={1997},
  publisher={Springer Science \& Business Media},
  address={Boston, MA},
  isbn={978-0-8176-3863-4}
}

@inproceedings{
dosovitskiy2020image,
title={An Image is Worth 16x16 Words: Transformers for Image Recognition at Scale},
author={Alexey Dosovitskiy and Lucas Beyer and Alexander Kolesnikov and Dirk Weissenborn and Xiaohua Zhai and Thomas Unterthiner and Mostafa Dehghani and Matthias Minderer and Georg Heigold and Sylvain Gelly and Jakob Uszkoreit and Neil Houlsby},
booktitle={International Conference on Learning Representations},
year={2021},
url={https://openreview.net/forum?id=YicbFdNTTy}
}

@misc{gao2017properties,
      title={On the Properties of the Softmax Function with Application in Game Theory and Reinforcement Learning}, 
      author={Bolin Gao and Lacra Pavel},
      year={2018},
      eprint={1704.00805},
      archivePrefix={arXiv},
      primaryClass={math.OC},
      url={https://arxiv.org/abs/1704.00805}, 
}

@inproceedings{vaswani2017attention,
 author = {Vaswani, Ashish and Shazeer, Noam and Parmar, Niki and Uszkoreit, Jakob and Jones, Llion and Gomez, Aidan N and Kaiser, {\L}ukasz and Polosukhin, Illia},
 booktitle = {Advances in Neural Information Processing Systems},
 editor = {I. Guyon and U. Von Luxburg and S. Bengio and H. Wallach and R. Fergus and S. Vishwanathan and R. Garnett},
 pages = {},
 publisher = {Curran Associates, Inc.},
 title = {Attention is All you Need},
 url = {https://proceedings.neurips.cc/paper_files/paper/2017/file/3f5ee243547dee91fbd053c1c4a845aa-Paper.pdf},
 volume = {30},
 year = {2017}
}

@inproceedings{
gu2025attentionsinkemergeslanguage,
title={When Attention Sink Emerges in Language Models: An Empirical View},
author={Xiangming Gu and Tianyu Pang and Chao Du and Qian Liu and Fengzhuo Zhang and Cunxiao Du and Ye Wang and Min Lin},
booktitle={The Thirteenth International Conference on Learning Representations},
year={2025},
url={https://openreview.net/forum?id=78Nn4QJTEN}
}

@article{feng2025edit0,
  title   = {EDIT: Enhancing Vision Transformers by Mitigating Attention Sink through an Encoder-Decoder Architecture},
  author  = {Wenfeng Feng and Hongxiang Wang and Jianlong Wang and Xin Zhang and Jingjing Zhao and Yueyue Liang and Xiang Chen and Duokui Han},
  year    = {2025},
  journal = {arXiv preprint arXiv: 2504.06738}
}

@inproceedings{
xiao2023efficient,
title={Efficient Streaming Language Models with Attention Sinks},
author={Guangxuan Xiao and Yuandong Tian and Beidi Chen and Song Han and Mike Lewis},
booktitle={The Twelfth International Conference on Learning Representations},
year={2024},
url={https://openreview.net/forum?id=NG7sS51zVF}
}

@inproceedings{dong2021attention,
  title={Attention is not all you need: Pure attention loses rank doubly exponentially with depth},
  author={Dong, Yihe and Cordonnier, Jean-Baptiste and Loukas, Andreas},
  booktitle={International conference on machine learning},
  pages={2793--2803},
  year={2021},
  organization={PMLR}
}

@inproceedings{
wu2024roleattentionmaskslayernorm,
title={On the Role of Attention Masks and {L}ayer{N}orm in Transformers},
author={Xinyi Wu and Amir Ajorlou and Yifei Wang and Stefanie Jegelka and Ali Jadbabaie},
booktitle={The Thirty-eighth Annual Conference on Neural Information Processing Systems},
year={2024},
url={https://openreview.net/forum?id=lIH6oCdppg}
}

@inproceedings{
mao2022understanding,
title={Understanding Zero-shot Adversarial Robustness for Large-Scale Models},
author={Chengzhi Mao and Scott Geng and Junfeng Yang and Xin Wang and Carl Vondrick},
booktitle={The Eleventh International Conference on Learning Representations },
year={2023},
url={https://openreview.net/forum?id=P4bXCawRi5J}
}

@inproceedings{zhou2022understanding,
  title={Understanding the robustness in vision transformers},
  author={Zhou, Daquan and Yu, Zhiding and Xie, Enze and Xiao, Chaowei and Anandkumar, Animashree and Feng, Jiashi and Alvarez, Jose M},
  booktitle={International conference on machine learning},
  pages={27378--27394},
  year={2022},
  organization={PMLR}
}

@inproceedings{
shi2022revisitingoversmoothingbertperspective,
title={Revisiting Over-smoothing in {BERT} from the Perspective of Graph},
author={Han Shi and Jianhui Gao and Hang Xu and Xiaodan Liang and Zhenguo Li and Lingpeng Kong and Stephen M. S. Lee and James Kwok},
booktitle={International Conference on Learning Representations},
year={2022},
url={https://openreview.net/forum?id=dUV91uaXm3}
}

@inproceedings{
loshchilov2017sgdrstochasticgradientdescent,
title={{SGDR}: Stochastic Gradient Descent with Warm Restarts},
author={Ilya Loshchilov and Frank Hutter},
booktitle={International Conference on Learning Representations},
year={2017},
url={https://openreview.net/forum?id=Skq89Scxx}
}

@misc{krizhevsky2009learning,
  title={Learning multiple layers of features from tiny images},
  author={Krizhevsky, Alex and Hinton, Geoffrey and others},
  year={2009},
  publisher={Toronto, ON, Canada}
}

@inproceedings{5206848,
  title={Image{N}et: A large-scale hierarchical image database},
  author={Deng, Jia and Dong, Wei and Socher, Richard and Li, Li-Jia and Li, Kai and Fei-Fei, Li},
  booktitle={2009 IEEE conference on computer vision and pattern recognition},
  pages={248--255},
  year={2009},
  organization={IEEE}
}

@misc{goodfellow2015explainingharnessingadversarialexamples,
      title={Explaining and Harnessing Adversarial Examples}, 
      author={Ian J. Goodfellow and Jonathon Shlens and Christian Szegedy},
      year={2015},
      eprint={1412.6572},
      archivePrefix={arXiv},
      primaryClass={stat.ML},
      url={https://arxiv.org/abs/1412.6572}, 
}

@inproceedings{
madry2019deeplearningmodelsresistant,
title={Towards Deep Learning Models Resistant to Adversarial Attacks},
author={Aleksander Madry and Aleksandar Makelov and Ludwig Schmidt and Dimitris Tsipras and Adrian Vladu},
booktitle={International Conference on Learning Representations},
year={2018},
url={https://openreview.net/forum?id=rJzIBfZAb},
}

@misc{imagenette,
  author={Jeremy Howard},
  title={Imagenette},
  url={https://github.com/fastai/imagenette/},
  year={2019}
}

@article{
nair2025softmax,
title={Softmax is ${1/2}$-{L}ipschitz: A tight bound across all ${\ell_p}$ norms},
author={Pravin Nair},
journal={Transactions on Machine Learning Research},
issn={2835-8856},
year={2026},
url={https://openreview.net/forum?id=6dowaHsa6D},
note={}
}

@article{Huang_Farahmand_Kitani_Bagnell_2015, title={Approximate MaxEnt Inverse Optimal Control and Its Application for Mental Simulation of Human Interactions}, volume={29}, 
url={https://ojs.aaai.org/index.php/AAAI/article/view/9605}, 
DOI={10.1609/aaai.v29i1.9605}, abstractNote={ &lt;p&gt; Maximum entropy inverse optimal control (MaxEnt IOC) is an effective means of discovering the underlying cost function of demonstrated human activity and can be used to predict human behavior over low-dimensional state spaces (i.e., forecasting of 2D trajectories). To enable inference in very large state spaces, we introduce an approximate MaxEnt IOC procedure to address the fundamental computational bottleneck stemming from calculating the partition function via dynamic programming. Approximate MaxEnt IOC is based on two components: approximate dynamic programming and Monte Carlo sampling. We analyze this approximation approach and provide a finite-sample error upper bound on its excess loss. We validate the proposed method in the context of analyzing dual-agent interactions from video, where we use approximate MaxEnt IOC to simulate mental images of a single agents body pose sequence (a high-dimensional image space). We experiment with sequences image data taken from RGB and RGBD data and show that it is possible to learn cost functions that lead to accurate predictions in high-dimensional problems that were previously intractable. &lt;/p&gt; }, number={1}, journal={Proceedings of the AAAI Conference on Artificial Intelligence}, author={Huang, De-An and Farahmand, Amir-massoud and Kitani, Kris and Bagnell, James}, year={2015}, month={Feb.} }

@book{golub2013matrix,
  title={Matrix Computations},
  author={Golub, Gene H. and Van Loan, Charles F.},
  year={2013},
  publisher={The Johns Hopkins University Press},
  address={Baltimore, MD},
  edition={4th},
  isbn={978-0-8018-5414-9}
}

@inproceedings{xiong2020preln,
  author       = {Ruibin Xiong and
                  Yunchang Yang and
                  Di He and
                  Kai Zheng and
                  Shuxin Zheng and
                  Chen Xing and
                  Huishuai Zhang and
                  Yanyan Lan and
                  Liwei Wang and
                  Tie{-}Yan Liu},
  title        = {On Layer Normalization in the Transformer Architecture},
  booktitle    = {Proceedings of the 37th International Conference on Machine Learning,
                  {ICML} 2020, 13-18 July 2020, Virtual Event},
  series       = {Proceedings of Machine Learning Research},
  volume       = {119},
  pages        = {10524--10533},
  publisher    = {{PMLR}},
  year         = {2020},
  url          = {http://proceedings.mlr.press/v119/xiong20b.html},
  timestamp    = {Fri, 10 Nov 2023 21:09:38 +0100},
  biburl       = {https://dblp.org/rec/conf/icml/XiongYHZZXZLWL20.bib},
  bibsource    = {dblp computer science bibliography, https://dblp.org}
}

@inproceedings{noci2022signalpropagation,
author = {Noci, Lorenzo and Anagnostidis, Sotiris and Biggio, Luca and Orvieto, Antonio and Singh, Sidak Pal and Lucchi, Aurelien},
title = {Signal propagation in transformers: theoretical perspectives and the role of rank collapse},
year = {2022},
isbn = {9781713871088},
publisher = {Curran Associates Inc.},
address = {Red Hook, NY, USA},
abstract = {Transformers have achieved remarkable success in several domains, ranging from natural language processing to computer vision. Nevertheless, it has been recently shown that stacking self-attention layers — the distinctive architectural component of Transformers — can result in rank collapse of the tokens' representations at initialization. The question of if and how rank collapse affects training is still largely unanswered, and its investigation is necessary for a more comprehensive understanding of this architecture. In this work, we shed new light on the causes and the effects of this phenomenon. First, we show that rank collapse of the tokens' representations hinders training by causing the gradients of the queries and keys to vanish at initialization. Furthermore, we provide a thorough description of the origin of rank collapse and discuss how to prevent it via an appropriate depth-dependent scaling of the residual branches. Finally, our analysis unveils that specific architectural hyperparameters affect the gradients of queries and values differently, leading to disproportionate gradient norms. This suggests an explanation for the widespread use of adaptive methods for Transformers' optimization.},
booktitle = {Proceedings of the 36th International Conference on Neural Information Processing Systems},
articleno = {1972},
numpages = {14},
location = {New Orleans, LA, USA},
series = {NIPS '22}
}

@inproceedings{
chen2025condensation,
title={From Condensation to Rank Collapse: A Two-Stage Analysis of Transformer Training Dynamics},
author={Zheng-An Chen and Tao Luo},
booktitle={The Thirty-ninth Annual Conference on Neural Information Processing Systems},
year={2025},
url={https://openreview.net/forum?id=gm5mkiTGOy}
}

@misc{yang2025qwen3technicalreport,
      title={Qwen3 Technical Report}, 
      author={An Yang and Anfeng Li and Baosong Yang and Beichen Zhang and Binyuan Hui and Bo Zheng and Bowen Yu and Chang Gao and Chengen Huang and Chenxu Lv and Chujie Zheng and Dayiheng Liu and Fan Zhou and Fei Huang and Feng Hu and Hao Ge and Haoran Wei and Huan Lin and Jialong Tang and Jian Yang and Jianhong Tu and Jianwei Zhang and Jianxin Yang and Jiaxi Yang and Jing Zhou and Jingren Zhou and Junyang Lin and Kai Dang and Keqin Bao and Kexin Yang and Le Yu and Lianghao Deng and Mei Li and Mingfeng Xue and Mingze Li and Pei Zhang and Peng Wang and Qin Zhu and Rui Men and Ruize Gao and Shixuan Liu and Shuang Luo and Tianhao Li and Tianyi Tang and Wenbiao Yin and Xingzhang Ren and Xinyu Wang and Xinyu Zhang and Xuancheng Ren and Yang Fan and Yang Su and Yichang Zhang and Yinger Zhang and Yu Wan and Yuqiong Liu and Zekun Wang and Zeyu Cui and Zhenru Zhang and Zhipeng Zhou and Zihan Qiu},
      year={2025},
      eprint={2505.09388},
      archivePrefix={arXiv},
      primaryClass={cs.CL},
      url={https://arxiv.org/abs/2505.09388}, 
}

@article{jiang2023mistral,
  title   = {Mistral 7{B}},
  author  = {Albert Q. Jiang and Alexandre Sablayrolles and Arthur Mensch and Chris Bamford and Devendra Singh Chaplot and Diego de las Casas and Florian Bressand and Gianna Lengyel and Guillaume Lample and Lucile Saulnier and Lélio Renard Lavaud and Marie-Anne Lachaux and Pierre Stock and Teven Le Scao and Thibaut Lavril and Thomas Wang and Timothée Lacroix and William El Sayed},
  year    = {2023},
  journal = {arXiv preprint arXiv: 2310.06825}
}

@inproceedings{yang2020accVSrob,
author = {Yang, Yao-Yuan and Rashtchian, Cyrus and Zhang, Hongyang and Salakhutdinov, Ruslan and Chaudhuri, Kamalika},
title = {A closer look at accuracy vs. robustness},
year = {2020},
isbn = {9781713829546},
publisher = {Curran Associates Inc.},
address = {Red Hook, NY, USA},
abstract = {Current methods for training robust networks lead to a drop in test accuracy, which has led prior works to posit that a robustness-accuracy tradeoff may be inevitable in deep learning. We take a closer look at this phenomenon and first show that real image datasets are actually separated. With this property in mind, we then prove that robustness and accuracy should both be achievable for benchmark datasets through locally Lipschitz functions, and hence, there should be no inherent tradeoff between robustness and accuracy. Through extensive experiments with robustness methods, we argue that the gap between theory and practice arises from two limitations of current methods: either they fail to impose local Lipschitzness or they are insufficiently generalized. We explore combining dropout with robust training methods and obtain better generalization. We conclude that achieving robustness and accuracy in practice may require using methods that impose local Lipschitzness and augmenting them with deep learning generalization techniques.},
booktitle = {Proceedings of the 34th International Conference on Neural Information Processing Systems},
articleno = {720},
numpages = {14},
location = {Vancouver, BC, Canada},
series = {NIPS '20}
}

@inproceedings{bethune2022payattention2loss,
author = {B\'{e}thune, Louis and Boissin, Thibaut and Serrurier, Mathieu and Mamalet, Franck and Friedrich, Corentin and Gonz\'{a}lez-Sanz, Alberto},
title = {Pay attention to your loss: understanding misconceptions about 1-{L}ipschitz neural networks},
year = {2022},
isbn = {9781713871088},
publisher = {Curran Associates Inc.},
address = {Red Hook, NY, USA},
abstract = {Lipschitz constrained networks have gathered considerable attention in the deep learning community, with usages ranging from Wasserstein distance estimation to the training of certifiably robust classifiers. However they remain commonly considered as less accurate, and their properties in learning are still not fully understood. In this paper we clarify the matter: when it comes to classification 1-Lipschitz neural networks enjoy several advantages over their unconstrained counterpart. First, we show that these networks are as accurate as classical ones, and can fit arbitrarily difficult boundaries. Then, relying on a robustness metric that reflects operational needs we characterize the most robust classifier: the WGAN discriminator. Next, we show that 1-Lipschitz neural networks generalize well under milder assumptions. Finally, we show that hyper-parameters of the loss are crucial for controlling the accuracy-robustness trade-off. We conclude that they exhibit appealing properties to pave the way toward provably accurate, and provably robust neural networks.},
booktitle = {Proceedings of the 36th International Conference on Neural Information Processing Systems},
articleno = {1460},
numpages = {15},
location = {New Orleans, LA, USA},
series = {NIPS '22}
}

@InProceedings{muthukumar23sparsityaware,
  title = 	 {Sparsity-aware generalization theory for deep neural networks},
  author =       {Muthukumar, Ramchandran and Sulam, Jeremias},
  booktitle = 	 {Proceedings of Thirty Sixth Conference on Learning Theory},
  pages = 	 {5311--5342},
  year = 	 {2023},
  editor = 	 {Neu, Gergely and Rosasco, Lorenzo},
  volume = 	 {195},
  series = 	 {Proceedings of Machine Learning Research},
  month = 	 {12--15 Jul},
  publisher =    {PMLR},
  pdf = 	 {https://proceedings.mlr.press/v195/muthukumar23a/muthukumar23a.pdf},
  url = 	 {https://proceedings.mlr.press/v195/muthukumar23a.html},
  abstract = 	 {Deep artificial neural networks achieve surprising generalization abilities that remain poorly understood. In this paper, we present a new approach to analyzing generalization for deep feed-forward ReLU networks that takes advantage of the degree of sparsity that is achieved in the hidden layer activations. By developing a framework that accounts for this reduced effective model size for each input sample, we are able to show fundamental trade-offs between sparsity and generalization. Importantly, our results make no strong assumptions about the degree of sparsity achieved by the model, and it improves over recent norm-based approaches. We  illustrate our results numerically, demonstrating non-vacuous bounds when coupled with data-dependent priors even in over-parametrized settings.}
}

@inproceedings{zhai2023stabilizing,
author = {Zhai, Shuangfei and Likhomanenko, Tatiana and Littwin, Etai and Busbridge, Dan and Ramapuram, Jason and Zhang, Yizhe and Gu, Jiatao and Susskind, Josh},
title = {Stabilizing transformer training by preventing attention entropy collapse},
year = {2023},
publisher = {JMLR.org},
abstract = {Training stability is of great importance to Transformers. In this work, we investigate the training dynamics of Transformers by examining the evolution of the attention layers. In particular, we track the attention entropy for each attention head during the course of training, which is a proxy for model sharpness. We identify a common pattern across different architectures and tasks, where low attention entropy is accompanied by high training instability, which can take the form of oscillating loss or divergence. We denote the pathologically low attention entropy, corresponding to highly concentrated attention scores, as entropy collapse. As a remedy, we propose σReparam, a simple and efficient solution where we reparametrize all linear layers with spectral normalization and an additional learned scalar. We demonstrate that σReparam successfully prevents entropy collapse in the attention layers, promoting more stable training. Additionally, we prove a tight lower bound of the attention entropy, which decreases exponentially fast with the spectral norm of the attention logits, providing additional motivation for our approach. We conduct experiments with σReparam on image classification, image self-supervised learning, machine translation, speech recognition, and language modeling tasks. We show that σReparam provides stability and robustness with respect to the choice of hyperparameters, going so far as enabling training (a) a Vision Transformer to competitive performance without warmup, weight decay, layer normalization or adaptive optimizers; (b) deep architectures in machine translation and (c) speech recognition to competitive performance without warmup and adaptive optimizers. Code is available at https://github.com/apple/ml-sigma-reparam.},
booktitle = {Proceedings of the 40th International Conference on Machine Learning},
articleno = {1709},
numpages = {34},
location = {Honolulu, Hawaii, USA},
series = {ICML'23}
}

@inproceedings{
barbero2025llms,
title={Why do {LLM}s attend to the first token?},
author={Federico Barbero and Alvaro Arroyo and Xiangming Gu and Christos Perivolaropoulos and Petar Veli{\v{c}}kovi{\'c} and Razvan Pascanu and Michael M. Bronstein},
booktitle={Second Conference on Language Modeling},
year={2025},
url={https://openreview.net/forum?id=tu4dFUsW5z}
}

@inproceedings{
qiu2025gated,
title={Gated Attention for Large Language Models: Non-linearity, Sparsity, and Attention-Sink-Free},
author={Zihan Qiu and Zekun Wang and Bo Zheng and Zeyu Huang and Kaiyue Wen and Songlin Yang and Rui Men and Le Yu and Fei Huang and Suozhi Huang and Dayiheng Liu and Jingren Zhou and Junyang Lin},
booktitle={The Thirty-ninth Annual Conference on Neural Information Processing Systems},
year={2025},
url={https://openreview.net/forum?id=1b7whO4SfY}
}

@article{fasino,
author = {Fasino, Dario},
year = {2022},
month = {09},
pages = {},
title = {Orthogonal {C}auchy-like matrices},
volume = {92},
journal = {Numerical Algorithms},
doi = {10.1007/s11075-022-01391-y}
}

@misc{li2026omnigaia,
      title={Omni{G}{A}{I}{A}: Towards Native Omni-Modal {AI} Agents}, 
      author={Xiaoxi Li and Wenxiang Jiao and Jiarui Jin and Shijian Wang and Guanting Dong and Jiajie Jin and Hao Wang and Yinuo Wang and Ji-Rong Wen and Yuan Lu and Zhicheng Dou},
      year={2026},
      eprint={2602.22897},
      archivePrefix={arXiv},
      primaryClass={cs.AI},
      url={https://arxiv.org/abs/2602.22897}, 
}

@inproceedings{kostenetskiy2021hpc,
  title={{HPC} resources of the higher school of economics},
  author={Kostenetskiy, PS and Chulkevich, RA and Kozyrev, VI},
  booktitle={Journal of Physics: Conference Series},
  volume={1740},
  number={1},
  pages={012050},
  year={2021},
  organization={IOP Publishing}
}

@article{menon2025lipshift,
  title={Lip{S}hi{FT}: A certifiably robust shift-based vision transformer},
  author={Menon, Rohan and Franco, Nicola and G{\"u}nnemann, Stephan},
  journal={arXiv preprint arXiv:2503.14751},
  year={2025}
}

@inproceedings{mao2022towards,
  title={Towards robust vision transformer},
  author={Mao, Xiaofeng and Qi, Gege and Chen, Yuefeng and Li, Xiaodan and Duan, Ranjie and Ye, Shaokai and He, Yuan and Xue, Hui},
  booktitle={Proceedings of the IEEE/CVF conference on Computer Vision and Pattern Recognition},
  pages={12042--12051},
  year={2022}
}

@inproceedings{liu2021swin,
    author    = {Liu, Ze and Lin, Yutong and Cao, Yue and Hu, Han and Wei, Yixuan and Zhang, Zheng and Lin, Stephen and Guo, Baining},
    title     = {Swin Transformer: Hierarchical Vision Transformer Using Shifted Windows},
    booktitle = {Proceedings of the IEEE/CVF International Conference on Computer Vision (ICCV)},
    month     = {October},
    year      = {2021},
    pages     = {10012-10022}
}

@article{wang2024cvit,
  title={{CViT}: Continuous vision transformer for operator learning},
  author={Wang, Sifan and Seidman, Jacob H and Sankaran, Shyam and Wang, Hanwen and Pappas, George J and Perdikaris, Paris},
  journal={arXiv preprint arXiv:2405.13998},
  year={2024}
}

@inproceedings{zhu2025ea,
  title={{EA-ViT}: Efficient adaptation for elastic vision transformer},
  author={Zhu, Chen and Zhao, Wangbo and Zhang, Huiwen and Zhou, Yuhao and Tang, Weidong and Wang, Shuo and Yuan, Zhihang and Shang, Yuzhang and Peng, Xiaojiang and Wang, Kai and others},
  booktitle={Proceedings of the IEEE/CVF International Conference on Computer Vision},
  pages={1038--1047},
  year={2025}
}

@book{murdock1999perturbations,
  title     = {Perturbations: Theory and Methods},
  author    = {Murdock, James A.},
  series    = {Classics in Applied Mathematics},
  volume    = {27},
  year      = {1999},
  publisher = {Society for Industrial and Applied Mathematics},
  address   = {Philadelphia, PA},
  isbn      = {978-0-89871-443-2},
  doi       = {10.1137/1.9781611971095}
}

@inproceedings{wang2022shift,
  title={When shift operation meets vision transformer: An extremely simple alternative to attention mechanism},
  author={Wang, Guangting and Zhao, Yucheng and Tang, Chuanxin and Luo, Chong and Zeng, Wenjun},
  booktitle={Proceedings of the AAAI conference on artificial intelligence},
  volume={36},
  number={2},
  pages={2423--2430},
  year={2022}
}

\onecolumn

\appendix
\renewcommand{\theHsection}{appendix.\Alph{section}}

\numberwithin{figure}{section}
\numberwithin{table}{section}

\section{Proof of \cref{th:sm_jac_bound}}
\label{sec:proof_sm_jac_bound}

Utilizing the interlacing theorem, we can prove \cref{th:sm_jac_bound}.
\begin{proof}
Firstly, we rearrange coordinates of $x$ in such a way that 
\begin{equation}\label{eq:x_order}
x_1 \geqslant \dots \geqslant x_N.
\end{equation}
This rearrangement preserves the singular values, as it corresponds to the symmetric permutation of rows and columns of $A = \diag(x) - xx^\top$ which preserves matrix symmetry and singular values. In this case, we deal with positive semi-definite matrix, thus we can apply theorems for eigenvalues.

It can be seen that $A$ and $\diag(x)$ are both symmetric and $A$ is a rank-1 perturbation of the symmetric matrix $\diag(x)$.
Applying the eigenvalue interlacing theorem to estimate eigenvalues $\alpha_1 \geqslant \dots \geqslant \alpha_N$ of $A$ using the eigenvalues of $\diag(x)$, we obtain
\begin{equation}
    x_1 \geqslant \alpha_1 \geqslant \dots \geqslant x_N \geqslant \alpha_N.
\end{equation}
Now, without loss of generality, we can assume that $x_i \neq 0 ~ \forall \; i$, otherwise if $x_r = 0$ for some $r$, we can consider a leading minor of $A$ of size $r \times r$ and prove the result for it. Indeed, for $\{\alpha_i\}_{i=r}^{N}$ the statement of the theorem is fulfilled, as $\alpha_r = \dots = \alpha_N = 0$ by the interlacing theorem:
\begin{equation}
    \text{for $k \geqslant r$:} \quad 0 = \alpha_k \leqslant x_{r}(1 - x_{r} + x_{r+1}) = x_{k}(1 - x_{k} + x_{k+1}) = 0.
\end{equation}
Let us additionally assume that the scalar product of every eigenvector $y^{(k)}$ with $x$ is non-negative:
\begin{equation}\label{eq:all_positive}
\forall \, k: \quad \langle x, y^{(k)}\rangle \geqslant 0.
\end{equation}
We can always achieve this by flipping eigenvectors sign, if necessary.
Consider eigenvalue equation for the $k$-th largest eigenvalue:
\begin{equation}
    (\diag(x) - x x^{\top})y^{(k)} = \alpha_k y^{(k)},
\end{equation}
which is equivalent to the following
\begin{equation}
    x \odot y^{(k)} - \langle {x}, {y^{(k)}}\rangle_{{}}x = \alpha_k y^{(k)} 
\end{equation}
yielding
\begin{equation}\label{eq:eigenvalue_equation}
    (x - \alpha_k \mathbf{1})\odot y^{(k)} = \langle x, y^{(k)} \rangle x,
\end{equation}
where $\odot$ is element-wise product and $\mathbf{1} \in \mathbb{R}^N$ is the vector of all ones.
Multiplying both parts of \Cref{eq:eigenvalue_equation} by $\mathbf{1}^{\top}$ from left and using $\langle a \odot b, \mathbf{1} \rangle = \langle a, b\rangle$ for an arbitrary pair of vectors $a$ and $b$, we have:
\begin{equation}\label{eq:zero_sum}
\begin{split}
    \langle x - \alpha_k \mathbf{1}, y^{(k)}\rangle &= \mathbf{1}^{\top}\left((x - \alpha_k\mathbf{1})\odot y^{(k)}\right) = \mathbf{1}^{\top}x\langle x, y^{(k)}\rangle = \langle x, y^{(k)}\rangle \quad \Rightarrow\\
    \Rightarrow \quad 0 &= \alpha_k \langle {\mathbf{1}}, {y^{(k)}}\rangle_{{}} \quad \Rightarrow \quad \sum_{j=1}^{N} y^{(k)}_j = 0 \quad  \forall \, \alpha_k \neq 0.
\end{split}
\end{equation}
From \Cref{eq:eigenvalue_equation} it follows that elements of the vector $(x - \alpha_k \mathbf{1}) \odot y^{(k)}$ have the same sign. Indeed, using \Cref{eq:all_positive} we conclude that the RHS of \Cref{eq:eigenvalue_equation} consists of positive coordinates.
This assumption implies that the vector  $(x - \alpha_k \mathbf{1}) \odot y^{(k)}$ consists of non-negative coordinates from which it follows that we can find the signs of $y^{(k)}$.
Indeed, utilizing the interlacing theorem, the first $k$ coordinates of $(x - \alpha_k)$ are non-negative, while the rest are non-positive. Hence, the same condition for the signs is satisfied for $y^{(k)}$: the first $k$ coordinates are non-negative, while the rest are non-positive. Note that the same effect regarding eigenvectors' signs, but in general case was initially observed in \citep{fasino}.
Using this sign property for the coordinates of $y^{(k)}$, we can derive $\alpha_k$ from \Cref{eq:eigenvalue_equation}: for $k$-th coordinate of the \Cref{eq:eigenvalue_equation}, utilizing \Cref{eq:zero_sum} and the fact that $\mathbf{1}^{\top}y^{(k)} = 0$, we have
\begin{equation}
\begin{split}
    \alpha_k &= \frac{x_k\left(y_k^{(k)} - \langle {x}, {y^{(k)}}\rangle_{{}}\right)}{y_{k}^{(k)}} = x_k \left(1 - \frac{\langle {x}, {y^{(k)}}\rangle_{}}{y_k^{(k)}}\right) = \\ &= x_k\left(1 - \frac{\sum_{j=1}^{k} x_j y_{j}^{(k)}}{y_{k}^{(k)}} + \frac{\sum_{j=k+1}^{N}x_j(-y_j^{(k)})}{y_{k}^{(k)}}\right)\leqslant \\ & \leqslant x_k\left(1 - \frac{x_k\sum_{j=1}^{k} y_j^{(k)} - x_{k+1}\sum_{j=1}^{k} y_j^{(k)}}{y_k^{(k)}}\right) \leqslant x_k(1 - x_k + x_{k+1}) = g_k(x),
\end{split}
\end{equation}
which completes the proof.
\end{proof}

\section{Proof of \cref{th:attn_jac_bound}}
\label{sec:proof_attn_jac_bound}
\begin{proof}
In \citep{kim2021lipschitz}, the authors gave the exact form of the Jacobian of a single attention head: 
we can represent Jacobian matrix $\mathcal{J}_{\texttt{Attn}_h}(X)\in \mathbb{R}^{Nd \times ND}$ as a block matrix with $d \times D$ blocks, whose block $\mathcal{J}_{\texttt{Attn}_h}(X)_{ij} \in \mathbb{R}^{d \times D}$ admits a closed form expression:
\begin{equation}
\begin{split}
	\mathcal{J}_{\texttt{Attn}_h}(X)_{ij} = (W_h^{V})^{\top}X^{\top}\smjacmat{P^h_{i, :}}(E_{ji}XA_h^{\top} & + XA_h\delta_{ij}) + (W_h^{V})^\top \cdot P^h_{ij},
\end{split}
\end{equation}
where $P^h_{ij}$ is the $(i, j)$ element of the attention map, $A_h = \frac{W_h^Q(W_h^K)^\top}{\sqrt{d}}$, $E_{ji}$ is a binary matrix with zeros everywhere except for the $(j, i)$-th entry, 
$\delta_{ij}$ is the Kronecker symbol, and $\smjacmat{P^h_{i, :}}$ is the following matrix:
\begin{equation}
	\smjacmat{P^h_{i, :}} = \diag(P^h_{i,:}) - P_{i, :}^{h \top} P^h_{i, :}
\end{equation}
that corresponds to the softmax Jacobian at the point $z'$ such that $\texttt{sm}(z') = P^h_{i, :}$.
Using the proposed notation, we can express $\mathcal{J}_{\texttt{Attn}_h}(X)$ in the matrix form:
\begin{equation}
    \mathcal{J}_{\texttt{Attn}_h}(X) = \left(I_N \otimes (W^{V}_h)^{\top}\right)\left({\texttt{Block}(L_{ij})} + {\texttt{Block}(R_{ij})}\right) + P^h \otimes (W_h^{V})^{\top},
\end{equation}
where $\texttt{Block}(L_{ij})$ and $\texttt{Block}(R_{ij})$ represent block matrices with $d\times D$ blocks, whose $(i, j)$ blocks are equal to 
\begin{equation}\label{eq:block_defs}
L_{ij} = X^{\top}\smjacmat{P^h_{i, :}}E_{ji}XA_h^{\top}, \quad R_{ij} = X^{\top}\smjacmat{P^h_{i, :}}XA_h\delta_{ij} 
\end{equation}
respectively.
Now our goal is to bound $\|\mathcal{J}_{\texttt{Attn}_h}(X)\|_2$ utilizing the structure above.
Using triangle inequality and submultiplicativity of the spectral norm, we can write the following upper bound on this norm:
\begin{equation}
\begin{split}
    & \|{\mathcal{J}_{\texttt{Attn}_h}(X)}\|_{{2}} \leqslant \\ & \leqslant \left\|\left(I_N \otimes (W^V_h)^{\top}\right)\left({\texttt{Block}(L_{ij})}+ {\texttt{Block}(R_{ij})}\right)\right\|_{{2}} + \left\|{P^h \otimes (W_h^V)^{\top}}\right\|_{{2}} \leqslant \\ & \leqslant \|W_h^V\|_2 \cdot \left(\|\texttt{Block}(L_{ij})\|_2 + \|\texttt{Block}(R_{ij})\|_2\right) + \left\|{P^h \otimes (W_h^V)^{\top}}\right\|_{{2}}.
\end{split}
\end{equation}
We can bound $\texttt{Block}(L_{ij})$ and $\texttt{Block}(R_{ij})$ by submultiplicativity of the spectral norm: for $\texttt{Block}(L_{ij})$ we can represent block matrix as a product of simpler structured matrices:
\begin{equation}\label{eq:block_l}
\begin{split}
    {\texttt{Block}(L_{ij})} = \left(
	I_N \otimes X^{\top}
	\right) & \cdot 
        \blkdiag\left(\smjacmat{P^h_{1, :}}, \dots,\smjacmat{P^h_{N, :}}\right) \cdot \\ \cdot
        & \left(
            \begin{bmatrix}
        			E_{11}XA_h^{\top} & \dots & E_{N1}XA_h^{\top}\\
        			\vdots & \ddots & \vdots\\
        			E_{1N}XA_h^{\top} & \dots & E_{NN}XA_h^{\top}
            \end{bmatrix}
        \right),
\end{split}
\end{equation}
where $\blkdiag(B_1, \dots, B_N)$ denotes block-diagonal matrix, whose diagonal blocks are equal to $B_1, \dots B_N$ respectively.
Using this decomposition, we deduce the following bound: using $\|A \otimes B\|_2 = \|A\|_2\|B\|_2$ we obtain
\begin{equation}\label{eq:ineq_block_l}
\begin{split}
	& \|{\texttt{Block}(L_{ij})}\|_{{2}} 
    \leqslant \\ \leqslant & \left\|
	X^{\top}
	\right\|_{{2}}
	\left\|
        \blkdiag\left(\smjacmat{P^h_{1, :}}, \dots,\smjacmat{P^h_{N, :}}\right)
	\right\|_{{2}}
        \left\|
            \begin{bmatrix}
        			E_{11}XA_h^{\top} & \dots & E_{N1}XA_h^{\top}\\
        			\vdots & \ddots & \vdots\\
        			E_{1N}XA_h^{\top} & \dots & E_{NN}XA_h^{\top}
            \end{bmatrix}
        \right\|_{{2}}.
\end{split}
\end{equation}

This inequality can be simplified because the latter matrix in the equation above has a specific structure. By unitary invariance of the spectral norm, we can permute the rows of the latter matrix in \Cref{eq:ineq_block_l} preserving the spectral norm, thus obtaining a block-diagonal matrix out of this block matrix.
Indeed, by applying permutation, we can group all the first rows of $E_{11}XA_h^{\top}, \dots, E_{1N}XA_{h}^{\top}$ to the first diagonal block of size $d \times D$, all the second rows of $E_{11}XA_h^{\top}, \dots, E_{1N}XA_{h}^{\top}$ to the second diagonal block and so on. Thus,
\begin{equation}\label{eq:block_sparsity_norm}
\begin{split}
    \left\|{\begin{bmatrix}
			E_{11}XA_h^{\top} & \dots & E_{N1}XA_h^{\top}\\
			\vdots & \ddots & \vdots\\
			E_{1N}XA_h^{\top} & \dots & E_{NN}XA_h^{\top}
	\end{bmatrix}
	}\right\|_{2} & = 
     \left\|\blkdiag\left(XA^{\top}_h, \dots, XA_h^{\top}\right)\right\|_{2} = \\ & = \|I_N \otimes XA_h^{\top}\|_2 = \|XA^{\top}_h\|_2 \leqslant \|X\|_2\|A_h^{\top}\|_2.
\end{split}
\end{equation}
Substituting this equation to \Cref{eq:ineq_block_l} we obtain the final bound for the block matrix $\texttt{Block}(L_{ij})$:
\begin{equation}
\begin{split}
	\left\|{\texttt{Block}(L_{ij})}\right\|_{{2}} \leqslant \left\|{X}\right\|_{{2}}^2 \left\|{A_h}\right\|_{{2}} \max_i \left\|\smjacmat{P_{i, :}^h}\right\|_{{2}}.
\end{split}
\end{equation}
In the case of $\texttt{Block}(R_{ij})$, we have a similar to \Cref{eq:block_l} block matrix  which can be decomposed as follows:
\begin{equation}\label{eq:block_r}
    {\texttt{Block}(R_{ij})} = 
    \left(
    I_N \otimes X^{\top}
    \right)
    \blkdiag\left(\smjacmat{P^h_{1, :}}, \dots, \smjacmat{P^h_{N, :}}\right)
    \left(
    I_N \otimes X
    \right)
    \left(
    I_N \otimes A_h
    \right),
\end{equation}
which can be expressed as a product of block-diagonal factors as well to obtain the following bound:
\begin{equation}\label{eq:ineq_block_r}
\begin{split}
	& \left\|{\texttt{Block}(R_{ij})}\right\|_{{2}} \leqslant \\ \leqslant 
	& \left\|
	I_N \otimes X^{\top}
	\right\|_{{2}}
	\left\|\blkdiag\left(\smjacmat{P^h_{1, :}}, \dots, \smjacmat{P^h_{N, :}}\right)\right\|_{{2}}
	\left\|
	I_N \otimes X
	\right\|_{{2}}
	\left\|
	I_N \otimes A_h
	\right\|_{{2}}
\leqslant \\ \leqslant & \left\|X\right\|_2^2\|A_h\|_2\max_{i}\|\smjacmat{P_{i, :}^h}\|_2.    
\end{split}
\end{equation}

Now, combining bounds from \Cref{eq:ineq_block_l} and \Cref{eq:ineq_block_r}, we deduce the resulting upper bound for the spectral norm of the single-head self-attention Jacobian:
\begin{equation}\label{eq:main_inequality}
\begin{split}
    \left\|{\mathcal{J}_{\texttt{Attn}_h}(X)}\right\|_{{2}} \leqslant \left\|{W_h^V}\right\|_{{2}}\left(\left\|{P}\right\|_{{2}} + 2 \left\|{X}\right\|_{{2}}^2 \left\|{A_h}\right\|_{{2}}\underset{i}{\max}\left\|\smjacmat{P_{i, :}^h}\right\|_{{2}}\right).
\end{split}
\end{equation}
which completes the proof.
\end{proof}

Now, to clarify the bound
\begin{equation}
\begin{split}
    & \left\|
    \begin{bmatrix}
            \mathcal{J}_{\texttt{Attn}_1}(X) & \dots & \mathcal{J}_{\texttt{Attn}_H}(X)
    \end{bmatrix}
    \right\|_2 
    \leqslant \sum_{i=1}^{H}\left\|\mathcal{J}_{\texttt{Attn}_i}(X)\right\|_2.
\end{split}
\end{equation}
we can bound the spectral norm of multi-head self-attention Jacobian by the sum of the norms of its heads:
\begin{equation}
\begin{split}
    & \left\|
    \begin{bmatrix}
            \mathcal{J}_{\texttt{Attn}_1}(X) & \dots & \mathcal{J}_{\texttt{Attn}_H}(X)
    \end{bmatrix}
    \right\|_2 = \lambda_{\max}
        \left(
            \sum_{h=1}^{H} \mathcal{J}_{\texttt{Attn}_h}(X)\mathcal{J}_{\texttt{Attn}_h}(X)^\top
        \right)^{\frac 12} \leqslant \\ \leqslant & \sqrt{\sum_{h=1}^{H}\lambda_{\max}\left(\mathcal{J}_{\texttt{Attn}_h}(X)\mathcal{J}_{\texttt{Attn}_h}^{\top}(X)\right)} = \sqrt{\sum_{h=1}^{H}\left\|\mathcal{J}_{\texttt{Attn}_h}(X)\right\|^2_2}
    \leqslant \sum_{h=1}^{H}\left\|\mathcal{J}_{\texttt{Attn}_h}(X)\right\|_2.
\end{split}
\end{equation}

\section{Proof of Proposition~\ref{prop:attn_jac_bound_without_sm_jac}}\label{sec:proofattn_jac_bound_without_sm_jac}
\begin{proof}
To prove this proposition we use the same notation and idea as in the proof of \cref{th:attn_jac_bound} (see Appendix~\ref{sec:proof_attn_jac_bound} for more details), expressing the self-attention Jacobian matrix as a sum of two block matrices:

\begin{equation}
\begin{split}
    & \left\|{\mathcal{J}_{\texttt{Attn}_h}(X)}\right\|_{{2}} \leqslant \\ & \leqslant \left\|I_N \otimes (W^V_h)^{\top}\right\|_{{2}}\left(\left\|{\texttt{Block}(L_{ij})}\right\|_{{2}} + \left\|{\texttt{Block}(R_{ij})}\right\|_{{2}}\right) + \left\|{P^h \otimes (W_h^V)^{\top}}\right\|_{{2}},
\end{split}
\end{equation}
where $\texttt{Block}(L_{ij})$ and $\texttt{Block}(R_{ij})$ are defined in \Cref{eq:block_defs}.

Now, similarly to the previous proof, we can express $\texttt{Block}(L_{ij})$ as a product of several structured matrices:

\begin{equation}
\begin{split}
    & \texttt{Block}(L_{ij}) = \\ = & \blkdiag\left(X^{\top}\smjacmat{P^h_{1, :}}, \dots, X^{\top}\smjacmat{P^h_{N, :}}\right) 
    \left(
        \begin{bmatrix}
                E_{11}XA_h^{\top} & \dots & E_{N1}XA_h^{\top}\\
                \vdots & \ddots & \vdots\\
                E_{1N}XA_h^{\top} & \dots & E_{NN}XA_h^{\top}
        \end{bmatrix}
    \right).
\end{split}
\end{equation}
From \Cref{eq:block_sparsity_norm}, we already know the exact spectral norm of the latter matrix in the equation above. Thus, it remains only to estimate the spectral norm of the block-diagonal matrix $\blkdiag(X^{\top}\smjacmat{P^h_{1, :}}, \dots, X^{\top}\smjacmat{P^h_{N, :}})$. Now, since the spectral norm of a block-diagonal matrix equals the maximum spectral norm of its diagonal blocks, one may decompose $\smjacmat{P_{i, :}^h}$ as follows:
\begin{equation}
\smjacmat{P^h_{i, :}} = \diag\left(\sqrt{P^h_{i, :}}\right)\left(I_N - \sqrt{P^{h, \top}_{i, :}}\sqrt{P^h_{i, :}}\right)\diag\left(\sqrt{P^h_{i, :}}\right)
\end{equation}
and, using this representation, bound $\|X^{\top}\smjacmat{P_{i, :}^h}\|_2$ as follows:
\begin{equation}
\begin{split}
    \|X^{\top}\smjacmat{P^h_{i, :}}\|_2 \leqslant & \left\|X^{\top}\diag\left(\sqrt{P^h_{i, :}}\right)\right\|_2 \left\|I_N - \sqrt{P^{h \top}_{i, :}}\sqrt{P^{h}_{i, :}}\right\|_2 \left\|\diag(\sqrt{P^h_{i, :}})\right\|_2
    \leqslant \\ \leqslant &  \left\|X^{\top}\diag\left(\sqrt{P^h_{i, :}}\right)\right\|_2 \cdot 1 \cdot 1.
\end{split}
\end{equation}
Indeed, $I_N - \sqrt{P^{h \top}_{i, :}}\sqrt{P^{h}_{i, :}}$ is an orthogonal projection matrix and $\diag\left(\sqrt{P^h_{i, :}}\right)$ is a diagonal matrix whose entries do not exceed $1$, which implies that its spectral norms does not exceed 1.
Now it remains to bound $\left\|X^{\top}\diag\left(\sqrt{P_{i,:}^h}\right)\right\|_2$ by the definition of the spectral norm, we can bound $X^{\top}\diag\left(\sqrt{P^h_{i, :}}\right)$ as follows:
\begin{equation}
\begin{split}
    \left\|X^{\top}\diag\left(\sqrt{P^h_{i, :}}\right)\right\|_2^2 &=  \sup_{\|y\|_2=1} \|\diag\left(\sqrt{P^h_{i, :}}\right)Xy\|_2^2
    \leqslant
    \sup_{\|y\|_2=1}\sum_{j=1}^{N} P^h_{i, j}\langle X_{j, :}, y\rangle^2 \leqslant \\ & \leqslant \sum_{j=1}^{N} P^h_{i, j} R^2 = R^2.
\end{split}
\end{equation}
Now we can derive similar to \Cref{eq:block_r} upper bound for $\texttt{Block}(L_{ij})$:

\begin{equation}
\begin{split}
    & \left\|\texttt{Block}(L_{ij})\right\|_2 \leqslant \\ \leqslant &
    \left\|
        \blkdiag\left(X^{\top}\smjacmat{P^h_{1, :}}, \dots, X^{\top}\smjacmat{P^h_{N, :}}\right)
    \right\|_2 \cdot
    \left\|{\begin{bmatrix}
			E_{11}XA_h^{\top} & \dots & E_{N1}XA_h^{\top}\\
			\vdots & \ddots & \vdots\\
			E_{1N}XA_h^{\top} & \dots & E_{NN}XA_h^{\top}
	\end{bmatrix}
    }\right\|_{2} \leqslant \\
    \leqslant & \max_{i} \|X^{\top}\smjacmat{P^h_{i, :}}\|_2  \|I _N\otimes XA^{\top}_h\|_2 \leqslant R \cdot \|X\|_2 \|A_h\|_2 \leqslant \sqrt{N}R^2 \|A_h\|_2,
\end{split}
\end{equation}
Similarly, for $\texttt{Block}(R_{ij})$, we obtain:
\begin{align}
\begin{split}
     & \left\|\texttt{Block}(R_{ij})\right\|_2 \leqslant \\ \leqslant & \left\|
        \blkdiag\left(X^{\top}\smjacmat{P^h_{1, :}}, \dots, X^{\top}\smjacmat{P^h_{N, :}}\right)
    \right\|_2 \|I_N \otimes X\|_2  \|I_N \otimes A_h\|_2 \leqslant \\ \leqslant & \max_{i}\|X^{\top}\smjacmat{P^h_{i, :}}\|_2 \|X\|_2\|A_h\|_2 \leqslant \sqrt{N}R^2 \|A_h\|_2.
\end{split}
\end{align}
Finally, combining all the modified bounds, we can write down the upper bound in terms of \citep{castin2023smooth}:
\begin{equation}\label{eq:castin_improve}
\begin{split}
    \|\mathcal{J}_{\texttt{Attn}_h}(X)\|_2 & \leqslant \|W_h^V\|_2 (\|P\|_2 + 2\sqrt{N}\|A_h\|_2R^2) \leqslant \|W_h^V\|_2(\sqrt{N} + 2\sqrt{N}R^2\|A_h\|_2).
\end{split}
\end{equation}
\end{proof}

Directly comparing the bound from \cref{th:how_smooth_is_attention} with the bound from \Cref{eq:castin_improve} leads to the following comparison:
\begin{equation}
\begin{split}
    3 \cdot (\|A_h\|_2^2R^4(4N+1)+N) &\geqslant N + 4NR^4\|A_h\|_2^2 + 4NR^2\|A_h\|_2 \\
\end{split}
\end{equation}
which is, after reduction of similar terms equivalent to
\begin{equation}
\begin{split}
    8N\|A_h\|_2^2R^4 + 2N + 3\|A_h\|_2^2R^4  &\geqslant 4NR^2\|A_h\|_2,
\end{split}
\end{equation}
from which, by AM-GM inequality, it follows that in practice the bound from Proposition~\ref{prop:attn_jac_bound_without_sm_jac} surpasses the bound from \citep{castin2023smooth}.

\section{Proof of Proposition~\ref{prop:uniform_on_k}}
\label{sec:proof_of_prop_1}
\begin{proof}
It is known that the $k$-th order statistic of a probability distribution is at most $1/k$.
Then, we can write the following chain of inequalities:
\begin{equation}
\begin{split}
& (P_{i, :}^{l, h})_{(1)}\left(1 - (P_{i, :}^{l, h})_{(1)}\right) \leqslant (P_{i, :}^{l, h})_{(1)}\left(1 - (P_{i, :}^{l, h})_{(1)} + (P_{i, :}^{l, h})_{(2)}\right) = g_1\left(P_{i, :}^{l, h}\right) \leqslant \\ \leqslant & \gamma \cdot g_k\left(P_{i, :}^{l, h}\right) = \gamma \cdot (P_{i, :}^{l, h})_{(k)}\left(1 -(P_{i, :}^{l, h})_{(k)} +(P_{i, :}^{l, h})_{(k+1)}\right) \leqslant \frac{1}{k} \cdot \gamma \cdot 1,
\end{split}
\end{equation}
which, via solving quadratic inequality, implies that
\begin{equation}
    (P_{i, :}^{l, h})_{(1)} \leqslant \frac{1 - \sqrt{1 - {4\gamma}/{k}}}{2} \; \lor \; (P_{i, :}^{l, h})_{(1)} \geqslant \frac{1 + \sqrt{1 - {4\gamma}/{k}}}{2}.
\end{equation}
\end{proof}
Now let us observe the case of $g_1 = g_k$, \ie the case when $\gamma = 1$.
From \cref{th:sm_jac_bound}, we can immediately conclude that $x_{(2)} = \dots = x_{(k)}$.
On the other hand, utilizing $g_1 = g_k$, we obtain:
\begin{equation}
\begin{split}
x_{(1)} (1 - x_{(1)} + x_{(2)}) & = 
x_{(k)} (1 - x_{(k)} + x_{(k+1)})  = x_{(2)}(1 - x_{(2)} + x_{(k+1)}).
\end{split}
\end{equation}
By using $x_{(1)}x_{(2)} \geqslant x_{(2)}x_{(k+1)}$, we arrive at:
\begin{equation}
\begin{split}
& x_1 - x_1^2 \leqslant x_2 - x_2^2,\\
& (x_{(1)} - x_{(2)})(1 - x_{(1)} - x_{(2)}) \leqslant 0.
\end{split}
\end{equation}
Since both terms in the latter product are nonnegative, we finally obtain either $x_{(1)} = 1$ or $x_{(1)} = x_{(2)} = \dots = x_{(k)}$. 
This would imply that the minimization of $g_1/g_k$ enforces attention map to distribute attention uniformly \emph{on at least $k$ tokens}. The categorical distribution $x_{(1)} = 1$ is formally prohibited in this case due to division by zero. 
To counteract it, we add a small $\epsilon \approx 1e\!-\!6$ to the denominator. Thus, in case of close proximity to the categorical distribution, we can still obtain a small regularization term. 
Otherwise, our loss promotes a uniform distribution on at least $k$ tokens.

Thus, when we decrease $k$, we impose a less strict constraint on the distribution of attention scores among tokens, which can serve as a tool to alleviate the oversmoothing problem~\citep{shi2022revisitingoversmoothingbertperspective}  in comparison to the uniform on all tokens.

\section{Matrix form of the dot-product self-attention Jacobian}\label{sec:matrix_jacobian_derivations}
Apart from block matrix view on the self-attention Jacobian, we also provide below the derivation of a closed-form matrix formula for the self-attention Jacobian matrix.

As attention can be viewed as a composition of matrix functions, we derive its Jacobian in a sequential manner. Let $X \in \mathbb{R}^{N \times D}$ and $W_h^Q, W_h^K, W_h^V \in \mathbb{R}^{D \times d}$. 
Importantly, in this section and in Appendix~\ref{sec:proof_gradient_norms_from_attention_jac} we use a different notation to explicitly derive Jacobian matrices \wrt different parameters, for example, model weights:
\begin{equation}
\begin{split}
    Q_h(X, W_h^Q) \coloneqq X W_h^Q, \quad K_h(X, W_h^K) &\coloneqq X W_h^K, \quad V_h(X, W_h^V) \coloneqq X W_h^V, \\
    QK_h(X, W_h^Q, W_h^K) \coloneqq & \frac{Q_h(X)K_h(X)^\top}{\sqrt{d}} = X A_h X^\top, \\ P^h(X, W_h^Q, W_h^K) \coloneqq & \: \texttt{sm}(QK_h(X, W_h^Q, W_h^K)).
\end{split}
\end{equation}
Then, $\texttt{Attn}_h$ can be written using this notation as follows:
\begin{equation}
    \texttt{Attn}_h(X, W_h^Q, W_h^K, W_h^V) = P^h(X, W_h^Q, W_h^K)V_h(X, W_h^V).
\end{equation}
We omit the functional dependencies for intermediate terms in later derivations for notation simplicity. Finally, we also denote $d_{X}f(\dotsc, X, \dotsc)$ as the differential of $f$ \wrt $X$.

Using a ``c''-order identity for Kronecker product vectorization $\mathrm{vec}(AXB) = (A \otimes B^\top) \mathrm{vec}(X)$ and a differential expression for the Jacobian $\mathrm{vec}(d_X F(X)) = \mathcal{J}_F(X) \mathrm{vec}(d_X X)$ we may write:
\begin{equation}
    \mathrm{vec}(d_X Q_h) = (I_N \otimes (W_h^Q)^{\top}) \mathrm{vec}(dX) 
    \quad \Longrightarrow \quad \mathcal{J}_{Q_h}(X) = (I_N \otimes (W_h^Q)^{\top})
\end{equation}
and, similarly,

\begin{equation}
    \mathcal{J}_{K_h}(X) = (I_N \otimes (W_h^K)^{\top}), \quad \mathcal{J}_{V_h}(X) = (I_N \otimes (W_h^V)^{\top}).
\end{equation}

Recall that Kronecker product admits the following property \citep{golub2013matrix}: for any pair of $A \in \mathbb{R}^{N \times N}, B \in \mathbb{R}^{D \times d}$: 
\begin{equation}
\mathcal{P}_{N, D}(A \otimes B) = (B \otimes A) \mathcal{P}_{N, d},
\end{equation}
where $\mathcal{P}_{\star,\star}$ is a ``perfect shuffle'' permutation matrix \citep{golub2013matrix}. Then, using the following property about ``perfect shuffle'' permutation: for an arbitrary $M \in \mathbb{R}^{m \times n}: \mathcal{P}_{m, n}\mathrm{vec}(M) = \mathrm{vec}(M^{\top})$, we obtain
\begin{equation}
\begin{split}
     \mathrm{vec}(d_X QK_h) &= \frac{1}{\sqrt{d}}\left((I_N \otimes K_h) \mathrm{vec}(d_X Q_h) + (Q_h \otimes I_N) \mathrm{vec}(d_X K_h^\top)\right) = \\ & =
     \frac{1}{\sqrt{d}}\Big((I_N \otimes XW_h^K)(I_N \otimes (W_h^{Q})^\top) + \\ & + (XW_h^Q \otimes I_N) \mathcal{P}_{N, d} (I_N \otimes (W_h^{K})^{\top})\Big)\mathrm{vec}(d_X X) = \\ &=
     (I_N \otimes XA_h^\top + (XA_h \otimes I_N) \mathcal{P}_{N, D}) \mathrm{vec}(d_X X).
\end{split}
\end{equation}
Then, utilizing composition differentiating rule, we can derive softmax Jacobian:
\begin{equation}\label{eq:bdiag_softmax_jac}
\begin{split}
    \mathrm{vec}(d_X P^h) = \blkdiag\left(\smjacmat{P^h_{1, :}}, \dots,\smjacmat{P^h_{N, :}}\right) \mathrm{vec}(d_X QK_h)
\end{split}
\end{equation}
and substitute it into the whole self-attention Jacobian expression \wrt $X$:
\begin{equation}
\begin{split}
    &\mathrm{vec}(d_X \texttt{Attn}_h) = (I_N \otimes V_h^\top(X)) \mathrm{vec}(d_X P^h) + (P^h \otimes I_d)\mathrm{vec}(d_X V_h) = \\ = &
    (I_N \otimes (W_h^V)^\top X^\top) \mathrm{vec}(d_X P^h) + (P^h \otimes I_d)(I_N \otimes (W_h^{V})^\top)\mathrm{vec}(d_X X).
\end{split}
\end{equation}
Thus, we obtain the final matrix form of the dot-product self-attention Jacobian:
\begin{equation}\label{eq:softmax_jac_maxtrix_form}
\begin{split}
    \mathcal{J}_{\texttt{Attn}_h}(X) = &(I_N \otimes (W_h^V)^\top X^\top)\blkdiag\left(\smjacmat{P^h_{1, :}}, \dots,\smjacmat{P^h_{N, :}}\right)\cdot \\ \cdot &(I_N \otimes XA_h^\top + (XA_h \otimes I_N)\mathcal{P}_{N, D}) + (P^h \otimes (W_h^{V})^\top).
\end{split}
\end{equation}

\section{Proof of Proposition~\ref{prop:gradient_norms_from_attention_jac}}\label{sec:proof_gradient_norms_from_attention_jac}
\begin{proof}
    Using the technique, similar to the one outlined above in Appendix~\ref{sec:matrix_jacobian_derivations}, we obtain analytic forms for Jacobians w.r.t. the self-attention weight matrices.

    First, let's notice that  $\texttt{Attn}_h(X, W_h^Q, W_h^K, W_h^V)$ depends on $W_h^V$ only through the term $V_h$, thus we have 
\begin{equation}
    \mathrm{vec}(d_{W_h^V} \texttt{Attn}_h) = (P^hX \otimes I_d) \mathrm{vec}(d_{W_h^V} W_h^V) \quad \Rightarrow \quad \frac{\partial \texttt{Attn}_h}{\partial W_h^V} = (P^hX \otimes I_d).
\end{equation}
Conversely, $\texttt{Attn}_h(X, W_h^Q, W_h^K, W_h^V)$ depends on $W_h^Q, W_h^K$ only through $P^h$. Thus, for $W_h^Q$ we have:
\begin{equation}
\begin{split}
    \mathrm{vec}(d_{W_h^Q} Q_h) &= (X \otimes I_d) \mathrm{vec}(d_{W_h^Q} W_h^Q) \\
    \mathrm{vec}(d_{W_h^Q} QK_h) &= \frac{1}{\sqrt{d}}(I_N \otimes K_h) \mathrm{vec}(d_{W_h^Q} Q_h) = \\ &= \frac{1}{\sqrt{d}}(I_N \otimes XW_h^K)(X \otimes I_d)\mathrm{vec}(d_{W_h^Q} W_h^Q) = \\ &= \frac{1}{\sqrt{d}}(X \otimes XW_h^K) \mathrm{vec}(d_{W_h^Q} W_h^Q).
\end{split}
\end{equation}
Then, using \Cref{eq:bdiag_softmax_jac}, we obtain a similar softmax score-dependent Jacobian expression for the weight matrix:
\begin{equation}
\begin{split}
    \mathrm{vec}(d_{W_h^Q} P^h) = \cfrac{1}{\sqrt{d}}\blkdiag\left(\smjacmat{P^h_{1, :}}, \dots,\smjacmat{P^h_{N, :}}\right) (X \otimes XW_h^K) \mathrm{vec}(d_{W_h^Q} W_h^Q)
\end{split}
\end{equation}
and, finally:
\begin{equation}
\begin{split}
    \mathrm{vec}(d_{W_h^Q}\texttt{Attn}_h) &= \cfrac{1}{\sqrt{d}}(I_N \otimes (W_h^V)^\top X^\top)\blkdiag\left(\smjacmat{P^h_{1, :}}, \dots,\smjacmat{P^h_{N, :}}\right) \cdot \\ &\cdot (X \otimes XW_h^K) \mathrm{vec}(d_{W_h^Q} W_h^Q).
\end{split}
\end{equation}
A Jacobian expression w.r.t. $W_h^K$ is obtained in a similar manner:
\begin{equation}
    \begin{split}
        \mathrm{vec}(d_{W_h^K}\texttt{Attn}_h) &= \cfrac{1}{\sqrt{d}}(I_N \otimes (W_h^V)^\top X^\top)\blkdiag\left(\smjacmat{P^h_{1, :}}, \dots,\smjacmat{P^h_{N, :}}\right) \cdot \\ &\cdot (XW_h^Q \otimes X)  \mathcal{P}_{D, d}\mathrm{vec}(d_{W_h^K} W_h^K).
    \end{split}
\end{equation}

Now, using the fact that $\|A \otimes B\|_2 = \|A\|_2\|B\|_2$, $\mathcal{P}_{D,d}$ is a permutation matrix, so $\left\| \mathcal{P}_{D,d} \right\|_2 = 1$, and submultiplicativity of the spectral norm, we can derive upper bounds on the spectral norms of self-attention Jacobians \wrt $W_h^Q, W_h^K, W_h^V$:

\begin{equation}
\begin{split}
    \left\|\frac{\partial \texttt{Attn}_h}{\partial W_h^Q}\right\|_2 & \leqslant \frac{1}{\sqrt{d}} \left\|W_h^V\right\|_2 \max_{i=1,\dotsc,N} \left\|\smjacmat{P_{i,:}^h}\right\|_2 \|X\|_2^3 \left\|W_h^K\right\|_2,\\
    \left\|\frac{\partial \texttt{Attn}_h}{\partial W_h^K}\right\|_2 & \leqslant \frac{1}{\sqrt{d}} \left\|W_h^V\right\|_2 \max_{i=1,\dotsc,N} \left\|\smjacmat{P_{i,:}^h}\right\|_2 \|X\|_2^3 \left\|W_h^Q\right\|_2,\\
    \left\|\frac{\partial \texttt{Attn}_h}{\partial W_h^V}\right\|_2 & \leqslant \left\| P^h \right\|_2 \left\| X \right\|_2 \leqslant \sqrt{\left\| P^h \right\|_1 \left\| P^h \right\|_\infty} \left\| X \right\|_2 \leqslant \sqrt{\max_j \textstyle{\sum_{i=1}^N} P^h_{ij}} \ \left\| X \right\|_2.
\end{split}
\end{equation}
    
\end{proof}

\section{Additional experimental details}
\label{sec:add_experiments}

All the training experiments are conducted using NVIDIA V100-SXM2-32GB GPU.
Other runs (\eg empirical observations) were conducted on NVIDIA A100-SXM2-80GB GPU. We ran all the experiments within $\sim8000$ GPU hours overall.

For training our models, we employ Stochastic Gradient Descent (SGD) with a weight decay of $10^{-4}$ and momentum equal to $0.9$. We train our models for $40$ epochs, with batch size equal to $512$ for CIFAR-10 and CIFAR-100 and $128$ for Imagenette. In order to have fair a comparison with \texttt{Specformer} \citep{hu2024specformer}, we remove all the bias terms in query, key and value parameters.
To improve convergence, we utilize a cosine annealing learning rate schedule \citep{loshchilov2017sgdrstochasticgradientdescent}. 
By default, we incorporate standard augmentation techniques, including horizontal flips, CutMix \citep{yun2019cutmix} and MixUp \citep{zhang2017mixup}.
In \Cref{tab:ablation_results_cifar100} and \Cref{tab:ablation_results_cifar10} we report an extended hyperparameter sweep for CIFAR-10 and CIFAR-100 datasets.

To show the scalability of our theoretical upper bound on the spectral norm of the self-attention Jacobian, we extend its comparison from 
\Cref{fig:vit_l_consts} 
to a larger-scale model and provide a graph for the visual part of Qwen3-VL-4B \cite{yang2025qwen3technicalreport}. The visual part of the model has 24 transformer blocks with 16 attention heads. We sample 100 image samples from OmniGAIA benchmark \cite{li2026omnigaia}, extract the attention scores from the visual part, and plot the heatmap with average logarithmic value of the self-attention Jacobian spectral norm upper bounds. Each image sample from the benchmark is transformed into a sequence of 1500 visual tokens. \Cref{fig:qwen_lip_consts} shows that even for larger sequence lengths, our theoretical upper bound is able to capture more information from the self-attention mechanism than previous theoretical approximations. We do not report the true mean value of the self-attention Jacobian spectral norm in this case because of its computational burden. But still, our bound consistently outperforms both theoretical bounds from \cref{th:how_smooth_is_attention} and \cref{th:specformer}.

\begin{figure}
    \centering
    \resizebox{0.8\textwidth}{!}{
    \begin{tikzpicture}
    \node (picture) at (0, 0) {\includegraphics[width=1\linewidth]{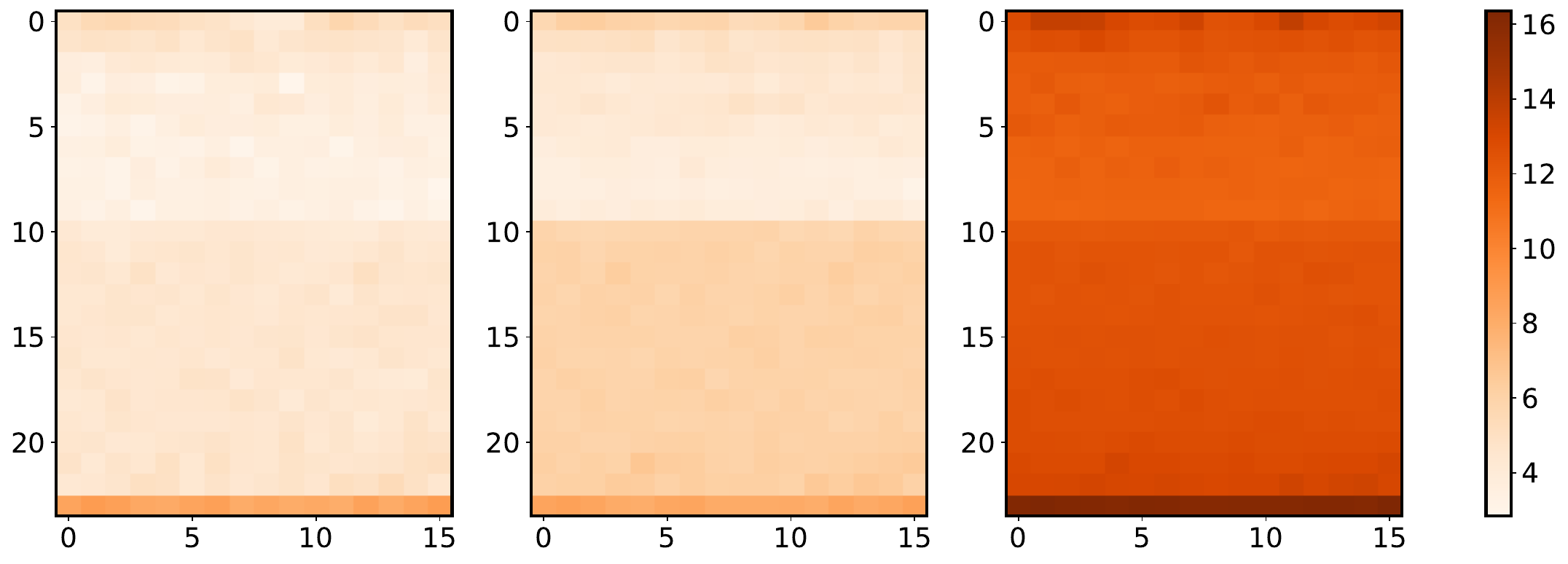}};
    \node[inner sep=0pt] (l1) at (-4.15, 2.5) {\tiny \Cref{th:attn_jac_bound}};
    \node[inner sep=0pt] (l2) at (-4.15, 2.3) {\tiny Mean: 2.1e4}; 
    \node[inner sep=0pt] (l1) at (-0.45, 2.5) {\tiny \Cref{th:how_smooth_is_attention}};
    \node[inner sep=0pt] (l2) at (-0.45, 2.3) {\tiny Mean: 2.3e5}; 
    \node[inner sep=0pt] (l1) at (3.25, 2.5) {\tiny \Cref{th:specformer}}; 
    \node[inner sep=0pt] (l2) at (3.25, 2.3) {\tiny Mean: 2.7e12}; 
    \node[inner sep=0pt] (h1) at (-4.15, -2.2) {\tiny Head};
    \node[inner sep=0pt] (h1) at (-0.45, -2.2) {\tiny Head};
    \node[inner sep=0pt] (h1) at (3.25, -2.2) {\tiny Head};
    \node[inner sep=0pt, rotate=90] (l3) at (-6.25, 0) {\tiny Layer};
    \end{tikzpicture}
    }
    \caption{Self-attention Jacobian spectral norms' comparison for each head and layer of the Qwen3-VL-4B \citep{yang2025qwen3technicalreport} vision part averaged across 100 OmniGAIA benchmark \citep{li2026omnigaia} samples. The title of each heatmap represents the mean value of the bound across all heads and layers.}
    \label{fig:qwen_lip_consts}
\end{figure}

\begin{table*}[t]
  \centering
    \caption{Hyperparameter sweep for ViT-B on CIFAR-100 dataset. A number after the name of the attack denotes attack budget. We evaluate PGD and AutoAttack with 4 and 10 steps respectively.}
  \label{tab:ablation_results_cifar100}
  \begin{tabular}{lcccccc}
    \toprule
    \multicolumn{1}{c}{Method} & \multicolumn{6}{c}{CIFAR-100} \\
    \cmidrule(lr){2-7} 
    & Standard & FGSM2 & FGSM4 & PGD2 & PGD4 & AA2 \\
    \midrule
    $\texttt{Baseline}$ & $90.90$ & $46.39$ & $39.54$ & $26.85$ & $12.91$ & $2.12$ \\
    $\texttt{Specformer}_{(10^{-2},10^{-2}, 10^{-2})}$ & $90.02$ & $44.12$ & $37.04$ & $27.55$ & $12.57$ & $3.01$ \\
    $\texttt{Specformer}_{(10^{-2}, 0, 0)}$ & $90.97$ & $45.87$ & $39.08$ & $28.38$ & $12.74$ & $2.61$ \\
    $\texttt{Specformer}_{(0, 10^{-2}, 0)}$ & $90.64$ & $46.04$ & $39.46$ & $29.29$ & $13.34$ & $2.57$ \\
    $\texttt{Specformer}_{(0, 0, 10^{-2})}$ & $90.91$ & $47.00$ & $40.68$ & $29.39$ & $14.28$ & $2.39$ \\
    $\texttt{Specformer}_{(10^{-3},10^{-3}, 10^{-3})}$ & $90.92$ & $46.52$ & $39.72$ & $28.13$ & $12.97$ & $2.33$ \\
    $\texttt{Specformer}_{(10^{-3}, 0, 0)}$ & $91.04$ & $47.60$ & $41.12$ & $28.22$ & $13.36$ & $2.16$ \\
    $\texttt{Specformer}_{(0, 10^{-3}, 0)}$ & $90.67$ & $46.13$ & $39.52$ & $27.00$ & $12.31$ & $2.13$ \\
    $\texttt{Specformer}_{(0, 0, 10^{-3})}$ & $91.04$ & $47.48$ & $40.79$ & $28.27$ & $13.14$ & $2.10$ \\
    $\texttt{Specformer}_{(10^{-4},10^{-4}, 10^{-4})}$ & $90.91$ & $47.18$ & $41.07$ & $28.34$ & $13.38$ & $2.22$ \\
    $\texttt{Specformer}_{(10^{-4}, 0, 0)}$ & $90.95$ & $46.73$ & $39.73$ & $27.64$ & $12.91$ & $1.90$ \\
    $\texttt{Specformer}_{(0, 10^{-4}, 0)}$ & $\mathbf{91.25}$ & $46.91$ & $40.48$ & $28.57$ & $13.37$ & $1.98$ \\
    $\texttt{Specformer}_{(0, 0, 10^{-4})}$ & $90.96$ & $46.92$ & $40.20$ & $27.97$ & $13.03$ & $2.22$ \\
    $\texttt{JaSMin}_{k=0, \lambda=10^{-2}}$ & $88.85$ & $48.09$ & $41.47$ & $\textbf{32.71}$ & $\textbf{19.12}$ & $\textbf{3.83}$ \\
    $\texttt{JaSMin}_{k=0, \lambda=10^{-3}}$ & $90.92$ & $\underline{48.41}$ & $41.98$ & $30.00$ & $14.91$ & $2.34$ \\
    $\texttt{JaSMin}_{k=0, \lambda=10^{-4}}$ & $90.90$ & $47.38$ & $41.02$ & $28.71$ & $13.64$ & $2.01$ \\
    $\texttt{JaSMin}_{k=10, \lambda=10^{-2}}$ & $87.81$ & $46.17$ & $38.78$ & $\underline{30.82}$ & $\underline{18.25}$ & $\underline{3.49}$ \\
    $\texttt{JaSMin}_{k=10, \lambda=10^{-3}}$ & $90.94$ & $\textbf{48.72}$ & $\textbf{42.28}$ & $29.99$ & $15.39$ & $2.48$ \\
    $\texttt{JaSMin}_{k=10, \lambda=10^{-4}}$ & $90.89$ & $47.14$ & $40.59$ & $28.50$ & $13.71$ & $2.17$ \\
    $\texttt{JaSMin}_{k=30, \lambda=10^{-2}}$ & $86.52$ & $42.76$ & $36.03$ & $26.90$ & $14.41$ & $2.54$ \\
    $\texttt{JaSMin}_{k=30, \lambda=10^{-3}}$ & $90.58$ & $48.27$ & $\underline{42.17}$ & $30.12$ & $15.28$ & $2.44$ \\
    $\texttt{JaSMin}_{k=30, \lambda=10^{-4}}$ & $\underline{91.10}$ & $46.90$ & $40.53$ & $27.45$ & $12.52$ & $2.00$ \\
    \bottomrule
  \end{tabular}
\end{table*}

\begin{table*}[t]
  \centering
    \caption{Hyperparameter sweep for ViT-B on CIFAR-10 dataset. The number after the name of the attack denotes attack budget. We evaluate PGD and AutoAttack with 4 and 10 steps respectively.}
  \label{tab:ablation_results_cifar10}
  \begin{tabular}{lcccccc}
    \toprule
    \multicolumn{1}{c}{Method} & \multicolumn{6}{c}{CIFAR-10} \\
    \cmidrule(lr){2-7} 
    & Standard & FGSM2 & FGSM4 & PGD2 & PGD4 & AA2 \\
    \midrule
    $\texttt{Baseline}$ & $\textbf{98.56}$ & $74.72$ & $67.17$ & $45.45$ & $19.71$ & $2.13$ \\
    $\texttt{Specformer}_{(10^{-2}, 10^{-2}, 10^{-2})}$ & $97.72$ & $71.38$ & $64.64$ & $\underline{49.92}$ & $\underline{24.18}$ & $\textbf{4.38}$ \\
    $\texttt{Specformer}_{(10^{-2}, 0, 0)}$ & $98.24$ & $73.00$ & $65.02$ & $44.80$ & $17.75$ & $2.87$ \\
    $\texttt{Specformer}_{(0, 10^{-2}, 0)}$ & $98.27$ & $73.88$ & $67.39$ & $47.88$ & $21.34$ & $\underline{3.13}$ \\
    $\texttt{Specformer}_{(0, 0, 10^{-2})}$ & $98.35$ & $74.06$ & $66.79$ & $47.74$ & $21.40$ & $2.46$ \\
    $\texttt{Specformer}_{(10^{-3}, 10^{-3}, 10^{-3})}$ & $98.36$ & $74.76$ & $68.61$ & $47.48$ & $21.37$ & $2.59$ \\
    $\texttt{Specformer}_{(10^{-3}, 0, 0)}$ & $98.36$ & $73.77$ & $66.65$ & $45.12$ & $18.93$ & $2.23$ \\
    $\texttt{Specformer}_{(0, 10^{-3}, 0)}$ & $98.41$ & $74.06$ & $67.05$ & $44.57$ & $19.30$ & $2.09$ \\
    $\texttt{Specformer}_{(0, 0, 10^{-3})}$ & $\underline{98.55}$ & $74.01$ & $67.63$ & $44.81$ & $19.16$ & $1.85$ \\
    $\texttt{Specformer}_{(10^{-4}, 10^{-4}, 10^{-4})}$ & $98.46$ & $\underline{75.30}$ & $\textbf{68.71}$ & $46.33$ & $19.82$ & $2.10$ \\
    $\texttt{Specformer}_{(10^{-4}, 0, 0)}$ & $98.45$ & $74.50$ & $67.46$ & $47.20$ & $20.40$ & $2.30$ \\
    $\texttt{Specformer}_{(0, 10^{-4}, 0)}$ & $98.46$ & $74.62$ & $67.61$ & $46.38$ & $20.39$ & $2.08$ \\
    $\texttt{Specformer}_{(0, 0, 10^{-4})}$ & $98.51$ & $74.42$ & $67.46$ & $45.85$ & $20.29$ & $2.04$ \\
    $\texttt{JaSMin}_{k=0, \lambda=10^{-2}}$ & $96.71$ & $65.59$ & $57.29$ & $44.98$ & $24.17$ & $\textbf{4.38}$ \\
    $\texttt{JaSMin}_{k=0, \lambda=10^{-3}}$ & $98.52$ & $74.80$ & $67.74$ & $46.53$ & $19.50$ & $2.23$ \\
    $\texttt{JaSMin}_{k=0, \lambda=10^{-4}}$ & $98.37$ & $74.59$ & $67.18$ & $45.17$ & $19.47$ & $2.10$ \\
    $\texttt{JaSMin}_{k=10, \lambda=10^{-2}}$ & $96.49$ & $66.67$ & $58.75$ & $45.66$ & $23.74$ & $\underline{3.13}$ \\
    $\texttt{JaSMin}_{k=10, \lambda=10^{-3}}$ & $98.21$ & $\textbf{75.76}$ & $\underline{69.10}$ & $\mathbf{51.64}$ & $\textbf{26.35}$ & $2.79$ \\
    $\texttt{JaSMin}_{k=10, \lambda=10^{-4}}$ & $98.40$ & $73.52$ & $65.92$ & $42.67$ & $17.33$ & $1.69$ \\
    $\texttt{JaSMin}_{k=30, \lambda=10^{-2}}$ & $95.70$ & $60.86$ & $50.87$ & $36.70$ & $14.69$ & $2.39$ \\
    $\texttt{JaSMin}_{k=30, \lambda=10^{-3}}$ & $98.09$ & $74.48$ & $67.59$ & $48.17$ & $22.91$ & $2.90$ \\
    $\texttt{JaSMin}_{k=30, \lambda=10^{-4}}$ & $98.50$ & $74.05$ & $66.80$ & $44.07$ & $18.94$ & $1.97$ \\
    \bottomrule
  \end{tabular}
\end{table*}

\clearpage

\section{Gradient dynamics of attention matrices}
\label{sec:gradient_dynamic_analyses}

In \Cref{fig:attn_grad_norms}, we report the layer-wise gradient spectral norms (averaged across heads) for the Q, K, and V matrices of a ViT-B trained on CIFAR-100. We evaluate three configurations considered optimal across all setups (performance metrics are detailed in \Cref{tab:results_cifar100}): $\texttt{Baseline}$, $\texttt{Specformer}$ with parameters ${(0, 0, 10^{-2})}$, and $\texttt{JaSMin}$ with $k=10$ and $\lambda=10^{-3}$. In these settings, neither $\texttt{Specformer}$ nor $\texttt{JaSMin}$ exhibits gradient vanishing, mitigating a common risk in models with reduced Lipschitz constants. Furthermore, with the current hyperparameters, $\texttt{Specformer}$ increases the gradient norm of the V matrix, which is the only regularized matrix in that specific configuration. Finally, we include $\texttt{JaSMin}_{k=10, \lambda=10^{-1}}$, which yields poor classification quality, to demonstrate that a strong regularization coefficient significantly alters the norm dynamics. The exact nature of this behavior remains a subject for future investigation.

\begin{figure}[h!tp]
    \centering
    \includegraphics[width=1\linewidth]{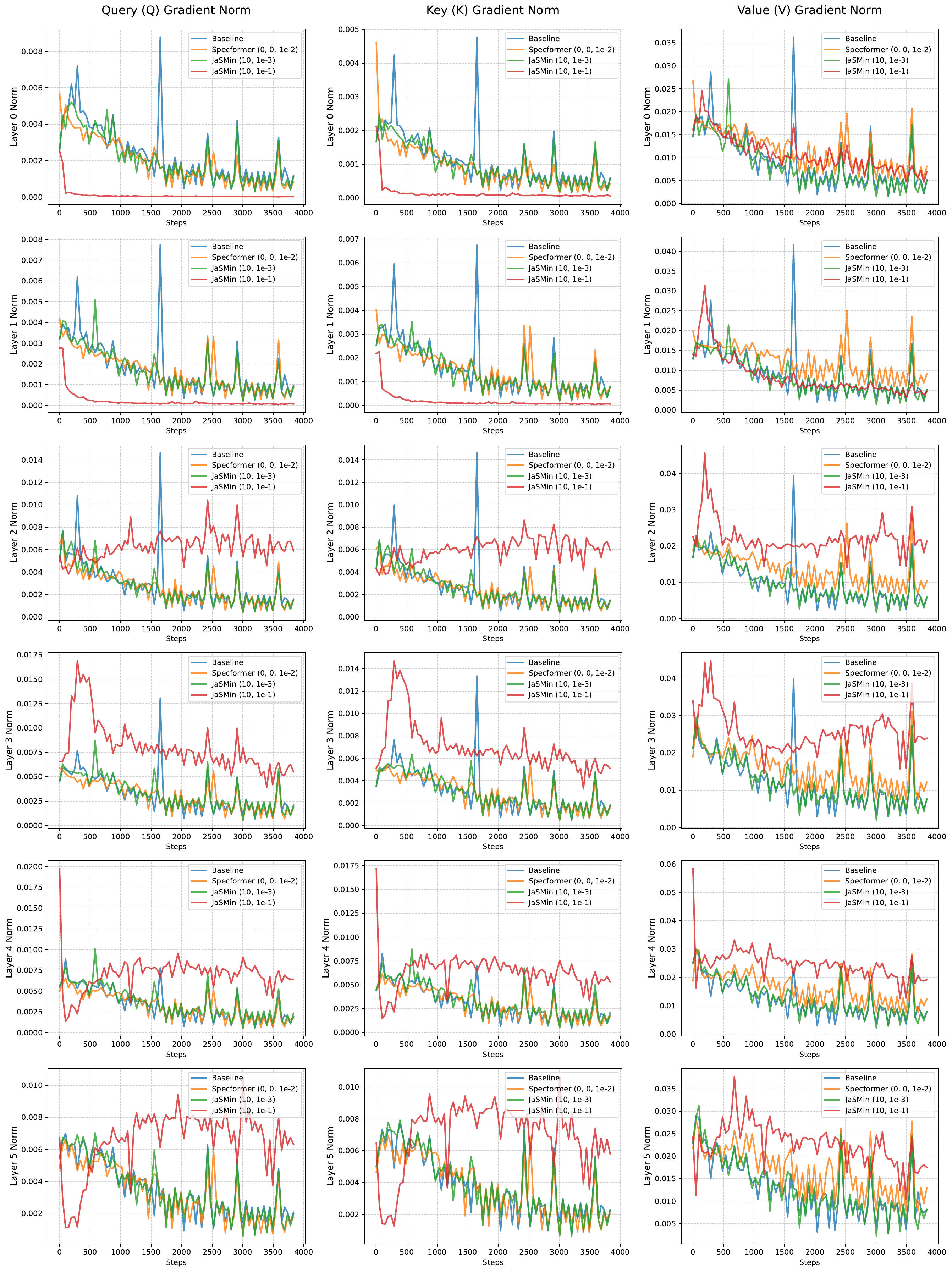}  
    \caption{Training gradient spectral norm dynamics. Results are shown for the first six layers comparing the $\texttt{Baseline}$ (blue), $\texttt{Specformer}_{(0, 0, 10^{-2})}$ (orange), $\texttt{JaSMin}_{k=10, \lambda=10^{-3}}$ (green) and $\texttt{JaSMin}_{k=10, \lambda=10^{-1}}$ (red). Only 6 of the 12 layers are plotted for visual clarity.}
    \label{fig:attn_grad_norms}
\end{figure}


\end{document}